\definecolor{shadecolor}{rgb}{0.6,0.3,0}
\algnewcommand\algorithmicinput{\textbf{Input:}}
\algnewcommand\INPUT{\item[\algorithmicinput]}
\algnewcommand\algorithmicoutput{\textbf{Output:}}
\algnewcommand\OUTPUT{\item[\algorithmicoutput]}
\newcommand{\beq}{\begin{eqnarray*}}
\newcommand{\eeq}{\end{eqnarray*}}
\newcommand{\beqn}{\begin{eqnarray}}
\newcommand{\eeqn}{\end{eqnarray}}
\newcommand{\field}[1]{\mathbb{#1}}
\newcommand{\reals}{\field{R}}
\newcommand{\sign}{{\mbox{sign}}}
\newcommand{\Y}{\mathcal{Y}}
\newcommand{\Z}{\mathcal{Z}}
\newcommand{\E}{\mathop{\mathbb{E}}}
\newcommand{\F}{\mathcal{F}}
\newcommand{\X}{\mathcal{X}}
\renewcommand{\H}{\mathcal{H}}
\newcommand{\fix}{^{\textrm{{\tiny \textup{FIX}}}}}
\renewcommand{\P}{\mathop{\mathbb{P}}}
\newcommand{\paren}[1]{\left( #1 \right)}
\newcommand{\sqprn}[1]{\left[ #1 \right]}
\newcommand{\gn}{\, | \,}
\newcommand{\set}[1]{\left\{ #1 \right\}}
\newcommand{\chr}{\boldsymbol{\mathbbm{1}}} 
\newcommand{\pred}[1]{\chr_{\left\{ #1 \right\}}}
\newcommand{\err}{\operatorname{err}}
\newcommand{\herr}{\widehat{\operatorname{err}}}
\newcommand{\inv}{^{-1}} 
\newcommand{\sgn}{\operatorname{sgn}}
\newcommand{\nrm}[1]{\left\Vert #1 \right\Vert}
\newcommand{\mycomment}[1]{}
\newcommand{\N}{\mathbb N}
\newcommand{\R}{\mathbb R}
\newcommand{\eps}{\varepsilon}
\DeclareMathOperator{\vol}{vol}
\DeclareMathOperator{\surf}{surf}
\begin{document}

\title{Nested Barycentric Coordinate System as an Explicit Feature Map}


\author{Lee-Ad Gottlieb         \and
        Eran Kaufman \and
        Aryeh Kontorovich \and
        Gabriel Nivasch \and
        Ofir Pele
}


\institute{            Lee-Ad Gottlieb  \at
              Ariel University \\
              \email{leead@ariel.ac.il}                 
                  \and
Eran Kaufman \at
              Ariel University \\
              \email{erankfmn@gmail.com}           
           \and
           Aryeh Kontorovich
	  \at
              Ben-Gurion University  \\
              \email{karyeh@cs.bgu.ac.il}   
                 \and
               Gabriel Nivasch
	  \at
               Ariel University   \\
              \email{gnivasch@yahoo.com}   
                \and
               Ofir Pele
	  \at
               SanDisk   \\
              \email{ofirpele@gmail.com}   
}

\date{Received: date / Accepted: date}

\maketitle

\begin{abstract}
We propose a new embedding method which is particularly well-suited for settings where the sample size greatly exceeds the
ambient dimension. Our technique consists of partitioning the space into simplices and then embedding the data points into
features corresponding to the simplices' barycentric coordinates. We then train a linear classifier in the rich feature
space obtained from the simplices. The decision boundary may be highly non-linear, though it is linear within each simplex
(and hence piecewise-linear overall). Further, our method can approximate any convex body.
We give generalization bounds based on empirical margin and a novel hybrid sample compression technique.
An extensive empirical evaluation shows that our method consistently outperforms
a range of popular kernel embedding methods.
\end{abstract}

%



\section{Introduction}

Kernel methods provide two principal benefits:
(1) They implicitly induce a non-linear feature map, which allows for a richer space of classifiers
and 
(2) when the kernel trick is available, they effectively replace the dimension $d$ of the feature
space with the sample size $n$ as the computational complexity parameter.
As such,
these are well-suited for the `high dimension, moderate data size' regime.
For very large datasets, however, naive use of kernel methods becomes prohibitive.
The cost is incurred both at the training stage, where an optimal classifier is searched for
over an $n$-dimensional space, and at the hypothesis evaluation stage, where a sum of $n$ kernel
evaluations must be computed.

For these reasons, for large data sets, {\em explicit feature maps} are sometimes preferred.
Various approximations have been proposed to mitigate the computational challenges
associated with explicit feature maps, including
\citet{chang2010training,maji2012max,perronnin2010large,rahimi2007random,vedaldi12efficient,Li2010,ST-18,Chum-15,ZK-13}.

\paragraph{Our contribution.}
We propose a new embedding method which is well-suited for the large sample regime.
Our technique consists of partitioning the space into a nested hierarchy of simplices,
and then embedding each data point into features corresponding to the barycentric coordinates
of the simplex that contains it.
We then train a linear classifier in the rich feature space obtained from the simplices.
For sample size $n$ in $d$-dimensional space, our algorithm has runtime
$O(d^2n)$ regardless of the dimension of the embedding space 
(when the approximation parameter is taken to be fixed, see Sections \ref{sec:learning}
and \ref{sec:exp}). 
In contrast, standard kernelized SVM has a runtime $O(d n^2)$.

Additionally, our embedding technique allows for highly non-linear decision boundaries,
although these are linear within each simplex (and hence piecewise-linear overall), as
explained in Section \ref{sec:emb}.
At the same time, our approach is sufficiently robust to closely approximate realizable
convex bodies -- in fact, multiple such bodies -- in only linear time in fixed dimension
(Section \ref{sec:approx}).
We also give generalization bounds based on empirical margin (Theorem~\ref{thm:bounds})
and a novel hybrid sample compression technique (Theorem~\ref{thm:adapt}).
%
%
%
Finally, we perform an extensive empirical evaluation, in which
our method consistently outperforms other explicit feature map
classification methods, including a range of popular kernel embedding methods 
(Section \ref{sec:exp}).

\subsection{Related Work} \label{sec:Related}
Kernel approximations for explicit feature maps come in two basic varieties:
data-independent approximations to fixed kernels, and data-dependent feature maps.

\paragraph{Data-dependent kernel approximations.}
This category includes
Nystrom's approximation \citep{Williams00theeffect}, which projects the data onto a
suitably selected subspace. If $K(x, z_i)$ is the
projection of example $\vec x$ onto the basis element $\vec z_i$,
the points
$\{\vec z_{1}, \ldots , \vec z_{n}\}$ are chosen to
maximally capture the data variability. Some methods select $\vec z_i$ from the 
sample. The selection can be random
\citep{NIPS2000_1866}, greedy
\citep{Smola:2000:SGM:645529.657980}, or involve an incomplete Cholesky
decomposition \citep{DBLP:journals/jmlr/FineS01}.
\citet{perronnin2010large} applied Nystrom's
approximation to each dimension of the data independently, greatly
increasing the efficiency of the method.

\paragraph{Data-independent kernel approximations.}
This category includes sampling the Fourier domain to compute explicit maps for
translation invariant kernels.
\citet{rahimi2007random,NIPS2008_3495} 
do this for the radial basis function kernel, also known as Random Kitchen Sinks.
\citet{Li2010,vedaldi12efficient}
applied this technique to certain group-invariant
kernels, and proposed an adaptive approximation to the $\chi^2$ kernel.
\citet{6027289} map the input data onto a low-dimensional spectral (Fourier) feature
space via a cosine transform.
\citet{DBLP:conf/bmvc/VempatiVZJ10} proposed a skewed chi squared kernel,
which allows for a simple Monte Carlo approximation of the feature map.
\citet{maji2012max} approximated the intersection kernel and the $\chi^2$ kernel by
a sparse closed-form feature map.
\citet{Pele-icml2013} suggested using not only piecewise linear function in each feature separately
but also to add all pairs of features.
\cite{chang2010training} conducted an extensive study on the usage of
the second-order polynomial explicit feature map.
\cite{bernal2012automated} approximated second order features relationships
via a Conditional Random Field model.

\paragraph{Decompositions and other SVM approaches.}
Simplex decompositions have been used to produce proximity-based classifiers \citep{BHP-18, Davies-96},
but to the best of our knowledge, ours is the first work to utilize either 
nested simplex decompositions or barycentric centers in conjunction with SVM. 
Simplex decompositions are related to the quadtree, and the quadtree has
been used together with SVM for various learning tasks \citep{SGM-04,BD-15}, 
but not for the creation of a kernel embeddings.
Simplex decompositions are more efficient than quadtrees,
since a simplex naturally decomposes into only $d+1$ sub-simplices (Section \ref{sec:emb}),
while a quadtree cell naturally decomposes into $2^d$ sub-cells.

As mentioned, our emphasis in this paper is specifically on explicit feature maps,
but there are numerous approaches to reducing kernel SVM runtime 
(for example the CoreSVM of \citet{TKC-05, TKK-07}).
Another related paradigm is that of {\em Local SVM} \citep{ZBMM-06, GH-13}, 
which assumes continuity of the labels with respect to spacial proximity;
similarly labeled points tend to cluster together.
This differs from the underlying assumption motivating kernel SVM,  
which assumes that the data is approximately linearly separable, 
but not necessarily clusterable.
These approaches find success in distinct settings, and are incomparable.

\paragraph{Approximating convex polytopes.}
Learning arbitrary convex bodies requires very large sample size \citep{GL09},
and so we focus instead on convex polytopes defined by a small number
of halfspaces. However, the problem of finding consistent polytopes is 
known to be $\mathrm{NP}$-complete even when the polytope 
is simply the intersection of two hyperplanes \citep{Megiddo1988}.
In fact, \citet{DBLP:journals/jcss/KhotS11}
showed that ``unless $\mathrm{NP}=\mathrm{RP}$, it is hard to (even) weakly
PAC-learn intersection of two halfspaces'',
even when allowed the richer class of $O(1)$ intersecting halfspaces.
\citet{DBLP:journals/jcss/KlivansS09} showed that 
learning an intersection of $n^\eps$
halfspaces is intractable regardless of hypothesis representation 
(under certain cryptographic assumptions).
These negative results have motivated researchers to consider the problem of 
discovering consistent polytopes which have some separating margin. 
Several approximation and learning algorithms have been suggested for
this problem, featuring bounds with steep dependence on the inverse margin
and number of halfspaces forming the polytope
\citep{DBLP:journals/ml/ArriagaV06, DBLP:journals/jcss/KlivansS08,DBLP:conf/nips/GottliebKKN18, GK18}. 

In contrast, we show in Section \ref{sec:approx} that our method is capable of 
approximating any convex polytope in linear time
(in fixed dimension), independent of the halfspace number 
and with only logarithmic dependence on the inverse margin.
It accomplishes this by finding a linear separator in the higher-dimensional embedded space,
and projecting the solution back into the origin space. 
However, our approach is not strictly comparable to those above, 
as they are concerned with minimizing the disagreement 
between the computed polytope (or object) and the true underlying polytope
with respect to the point space,
while we minimize the volume of the space between them.


\section{The barycentric coordinate system}\label{sec:emb}
Here we describe the nested barycentric coordinate system embedding. 
We explain its construction and description, 
how to embed a point from the origin space into the new coordinate
system, and how a point in the embedded system can be projected 
back into the origin space (Section \ref{sec:bary}). 
We then show that if we associate a weight with each simplex point,
then the embedding and weights together imply some (not necessarily convex) 
polytope on the origin space (Section \ref{sec:weight}). 
Later in Section \ref{sec:approx}, we will show that this system
is sufficiently robust that it can be used to approximate any convex
body.

\subsection{Nested barycentric embedding}\label{sec:bary}

Let $S\subset \R^d$ be a regular simplex of unit side-length, and let $\{q_0, \ldots, q_d\}$ be its vertices.
 Each  point $\vec x$ inside the simplex can be written using the barycentric coefficients:
\begin{equation}
\label{eq:proj}
\begin{aligned}
	 &\vv x  = \sum_{i=0}^{d} \alpha_{i}(\vv x) \vv q_i\\
  &\sum_{i=0}^{d} \alpha_{i}(\vv x) = 1 \quad  0 \leq \alpha_{i} \leq 1
\end{aligned}
\end{equation}
 Here $\alpha_i(\vv x)$  denotes the coefficient of point $\vv x$ corresponding to vertex $q_i$.
 Let the ordered vector of $\alpha$'s, $\{\alpha_0 , \ldots , \alpha_d \}$, 
corresponding to $\vv x$  be denoted as  $\phi_{d+1}(\vv x)$.
If we artificially augment the original feature space by adding another feature which equals to 1, i.e. $\vec x = (x_0 ,\ldots , x_{d-1},1)$
and $\vec q_t = (q_{1t} , \ldots , q_{d-1,t},1)$ and define the matrix $Q_t := (\vec q_0 , \dots , \vec q_t)$, then the transformation 
$\phi_{d+1}(\vv x) : \reals^{d+1} \rightarrow \reals^{d+1}$ is a linear transformation of the form $\vec x = Q \vec  \alpha$. 

We can further refine the system by 
introducing a new point $q_{d}$ inside the simplex, 
thereby splitting the simplex into $d+1$ new sub-simplices.
We order the coordinates of our system as $\{q_0, \ldots , q_{d} \}$.
A point $\vv x$ inside the system is embedded by 
first utilizing the $d+1$ vertices of its surrounding simplex
to compute the barycentric coefficients (the $\alpha$'s) of equation \ref{eq:proj}.
Then $\vv x$ is assigned a vector wherein a coordinate corresponding 
to one of these $d+1$ simplex vertices is set to the coefficient of that vertex,
and all other coordinates are set equal to 0.
This defines the embedding $\phi_{d+2}(\vv x) : \reals^{d+1} \rightarrow \reals^{d+2} $.

The refinement process can be continued by choosing points inside simplices to further split these simplices.
We define the nested architecture $B_t$ and its associated embedding $\phi_t(\vec x)$  
to be the coordinates $\{\vec q_0, \ldots , \vec q_{t} \}$ constructed 
from $B_{t-1}$ by concatenating a new point $\vec q_t$ at step $t$ to the previous coordinate system.
Each point $\vv x$ is embedded using the barycentric coefficients of the vertices of the simplex surrounding 
point $\vec x$ and by assigning those coefficients in the index of the corresponding vertices and by assigning zero to all other vertices.
We note that the embedding --- the {\em nested barycentric coordinate system} ---
is sparse, as only $d+1$ coefficients are non-zero,
and also that the embedded points lie on the $L_{1}$ sphere ($\sum \alpha = 1$).


\begin{figure}[htbp]
\centering
\subfloat[step 1]{\label{fig:a}\includegraphics[width=0.35\linewidth]{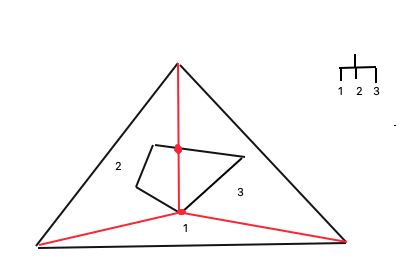}}
\subfloat[step 2.1]{\label{fig:b}\includegraphics[width=0.35\linewidth]{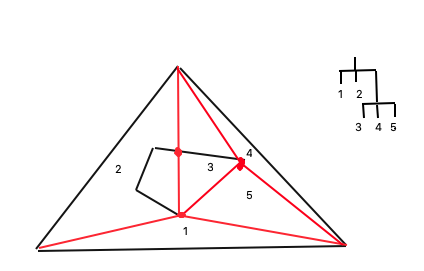}}
\subfloat[step 2.2]{\label{fig:c}\includegraphics[width=0.35\linewidth]{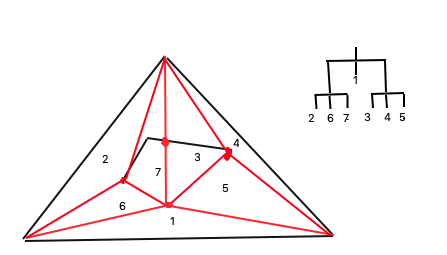}}\\
\subfloat[step 3]{\label{fig:d}\includegraphics[width=0.35\linewidth]{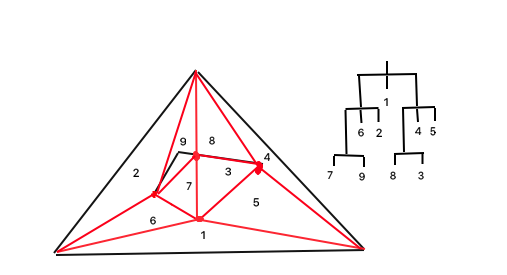}}
\subfloat[step 4]{\label{fig:e}\includegraphics[width=0.35\linewidth]{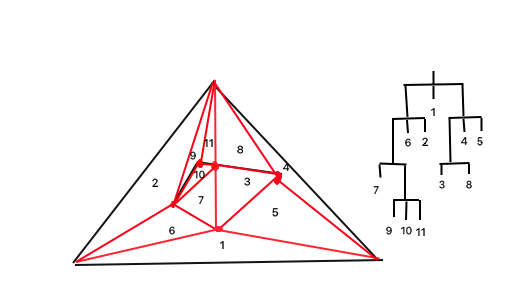}}
\caption{Creation of the nested system about a convex polytope}
\label{fig:split_proc}
\end{figure}
A point in the embedded space can be projected back into the origin space 
by utilizing the identity
 \begin{equation}
\label{eq:proj2}
\begin{aligned}
	 &\vv x  = \sum_{i=0}^{t} \alpha_{i}(\vv x) \vv q_i\\
\end{aligned}
\end{equation}

\subsection{Weights, hyperplanes and polytopes}\label{sec:weight}
Given an embedding, we will assign a set of weights 
$\vv w$ to the vertices $\{\vec q_0, \ldots, \vec q_t\}$.
Then the set of points $R$ such that: 
\begin{equation}
R= \{\vec x\in S : \vec w\cdot \phi_t(\vec x) \ge 0\}
\end{equation}
is a union of regions whose boundaries are unions and intersections of hyperplanes.
$R$ can represent a polytope as well as the union of several disjoint polytopes, 
each of which is not necessarily convex
(see Figure \ref{fig:split} for an illustration):

\begin{lemma}
Any hyperplane that crosses a single simplex can be defined by a set of weights 
$\vv w = \{w_0 , \ldots , w_d\}$,
such that all points $\vv x$ that lie on the hyperplane satisfy the equation $\vv w \cdot  \phi_{d+1}(\vv x) = 0 $.
Further, $R$ is a union of regions whose boundaries are unions and intersections of hyperplanes, 
where each simplex contains at most one hyperplane.
\end{lemma}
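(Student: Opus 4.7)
The plan is to leverage the linear-algebraic description of the embedding already stated in the paper: with $\vec x$ augmented to $(x_0,\ldots,x_{d-1},1)$ and the matrix $Q=(\vec q_0,\ldots,\vec q_d)$ whose columns are the augmented simplex vertices, one has $\vec x = Q\, \phi_{d+1}(\vec x)$, and $Q$ is invertible because the simplex is non-degenerate. This identity is what carries hyperplanes from the ambient space to weight vectors in the embedded space and vice-versa.

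For the first claim, I would start from an arbitrary hyperplane inside the simplex, written in augmented form as $\vec a \cdot \vec x = 0$ (absorbing the affine offset into the appended ``$1$''-coordinate). Substituting $\vec x = Q\, \phi_{d+1}(\vec x)$ gives $(Q^{\top} \vec a)\cdot \phi_{d+1}(\vec x)=0$, so the required weight vector is $\vec w := Q^{\top}\vec a \in \reals^{d+1}$. Conversely, for any $\vec w$ the equation $\vec w \cdot \phi_{d+1}(\vec x)=0$ pulls back via $\vec a := Q^{-\top}\vec w$ to an affine equation in $\vec x$, hence a single hyperplane (or, degenerately, the empty set or the whole simplex) inside $S$. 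This establishes the bijective correspondence between hyperplanes crossing a single simplex and weight assignments to its $d{+}1$ vertices.

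For the second claim about the nested embedding $\phi_t$, the key observation is locality: by construction, $\phi_t(\vec x)$ is zero in every coordinate except those indexed by the $d{+}1$ vertices of the leaf sub-simplex $S'$ containing $\vec x$. Consequently, the restriction
\begin{equation*}
\vec w\cdot \phi_t(\vec x)\bigm|_{S'} \;=\; \sum_{q_i\in S'} w_i\, \alpha_i(\vec x)
\end{equation*}
depends only on the $d{+}1$ weights attached to $S'$, and by the first part of the lemma its zero set inside $S'$ is at most one hyperplane. Taking a union over all leaf sub-simplices shows that the boundary of $R$ inside $S$ consists of at most one hyperplane per simplex, glued along shared simplex faces.

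Finally, the ``union and intersection of hyperplanes'' description of the boundary of $R$ follows by combining these local pieces with the simplicial partition: inside each leaf $S'$, $R\cap S'$ is the intersection of $S'$ with a single half-space, and $R$ is the union of these pieces across leaves. I expect the only subtle step to be the handling of degeneracies — weights for which $\vec w\cdot \phi_t$ vanishes identically on a whole sub-simplex, or hyperplanes that coincide with a simplex face shared by two leaves — which should be addressed by noting that such cases still fit the claimed description (the ``hyperplane'' inside that simplex is interpreted as empty, and coincidences along shared faces are absorbed into the ``union and intersection'' formulation), rather than by any separate argument.
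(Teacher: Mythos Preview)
Your argument is correct but takes a different route from the paper's. The paper proves the first claim by choosing $d$ linearly independent points on the given hyperplane, forming the $d\times(d{+}1)$ matrix $A$ whose rows are their barycentric embeddings, and then taking $\vec w$ to be a generator of the (one-dimensional) null-space of $A$; it then argues that every other point on the hyperplane, being an affine combination of the chosen $d$ points, also satisfies $\vec w\cdot\phi_{d+1}(\vec x)=0$ by linearity of $\phi_{d+1}$ in augmented coordinates. You instead transport the hyperplane's augmented normal $\vec a$ directly through the identity $\vec x = Q\,\phi_{d+1}(\vec x)$, obtaining the closed form $\vec w = Q^{\top}\vec a$ and the converse via $Q^{-\top}$. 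Your approach is cleaner in that it yields an explicit formula for $\vec w$ and handles both directions symmetrically without appealing to a null-space computation; the paper's approach is a bit more hands-on and avoids explicitly inverting $Q$. For the second claim, the paper essentially stops at ``every set of weights represents at most one hyperplane crossing the system,'' whereas you spell out the locality argument (that $\phi_t$ restricted to a leaf simplex coincides with $\phi_{d+1}$ on that simplex), which makes the structure of $R$ more explicit.
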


\begin{proof}
Choose $d$ linearly independent points on the given hyperplane. 
Since these points are inside the coordinate system, 
they have unique barycentric coefficients and thus a unique representation.
Finding these weights is equivalent to solving
$A  \cdot \vv{w} = \vv{0}$,
where $A$ which is a matrix of dimension $d \times (d+1)$ whose rows are
the embeddings $\phi_{d+1}(\vv x)$ of the points.
This is a homogeneous linear system and so the $w$'s are unique up to a scaling factor.
Every point on the hyperplane is a linear combination of those $d$ linearly independent points
and thus also satisfies the equation
$\sum_i  w_i \alpha_i = 0$. 
Likewise, every set of weights represents at most one hyperplane crossing the system.
\end{proof}

	
\begin{figure}[h]
\centering
\includegraphics[width=0.4\linewidth]{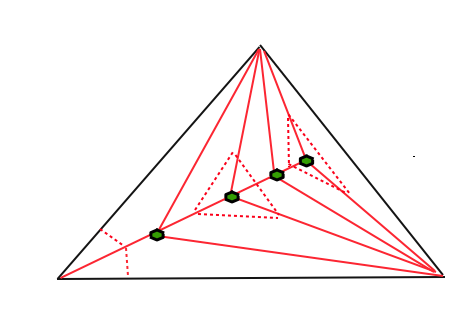}
\caption{An example of an open polytope and two disjoint polytopes as a nested barycentric system}
\label{fig:split}
\end{figure}

In Section \ref{sec:approx}, we will show that a simple nested barycentric system,
together with a prudent choice of weights, can be used to closely approximate any
given convex body. To this end, we will require a useful property of these systems
--- essentially, that splitting a simplex cannot decrease the expressiveness of the system.
Recall that a barycentric system $B_k$ is defined by an ordered set of points;
we will say that $B_k$ is {\em contained} in $B_t$ ($B_k  \subset B_t$) 
if $B_k$ is a prefix of $B_t$.

\begin{theorem}\label{thm:prefix}
\label{th:eq}
Let $B_k$ be a nested barycentric system, with $\{q_0, \ldots, q_k\}$ as its coordinates, 
and let $B_t$ be a nested barycentric system such that $B_k \subset B_t$.
Let $P$ be a polytope described in $B_k$ as: 
\begin{equation}
P= \{\vec x\in S : \vec w\cdot \phi_k(\vec x) \ge 0\}
\end{equation}
then there exist a set of weights $\vec w'$ such that $P$ can also be described by $B_t$.
Further, $\vec w$ is a prefix of $\vec w'$.
 \end{theorem}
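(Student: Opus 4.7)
My plan is to prove this by induction on $t - k$, where the inductive step handles the addition of a single new vertex to the barycentric system. The base case $t = k$ is immediate by taking $\vec w' = \vec w$. For the inductive step I would assume the claim holds for $B_{t-1}$, obtaining weights $\vec w''$ on $B_{t-1}$ that extend $\vec w$ and describe $P$, and then show how to extend $\vec w''$ by a single new coordinate $w_t$ so that the same polytope is described in $B_t$.

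By construction, $B_t$ is obtained from $B_{t-1}$ by choosing some simplex $T$ of $B_{t-1}$ with vertices $q_{i_0}, \ldots, q_{i_d}$ and inserting a new vertex $q_t$ in its interior, which splits $T$ into $d+1$ sub-simplices. Let $\mu_0, \ldots, \mu_d$ denote the barycentric coordinates of $q_t$ with respect to $T$. The critical choice is to define the extended weight vector $\vec w'$ by appending $w_t := \sum_{l=0}^{d} \mu_l w''_{i_l}$ to $\vec w''$, so that $\vec w''$ (and hence $\vec w$) remains a prefix of $\vec w'$. I would then verify that $\vec w' \cdot \phi_t(\vec x) = \vec w'' \cdot \phi_{t-1}(\vec x)$ for every $\vec x \in S$, which is exactly what is needed to conclude that the two systems describe the same polytope $P$.

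To verify this pointwise equality, I would split into cases. For $\vec x$ lying outside $T$, the surrounding simplex is unchanged, so $\phi_t(\vec x)$ coincides with $\phi_{t-1}(\vec x)$ padded with a zero at the new coordinate $t$, and the two inner products agree trivially. For $\vec x$ inside $T$, the function $\vec w'' \cdot \phi_{t-1}$ is affine on $T$ (a linear combination of barycentric coordinates on $T$), hence also affine on each of the $d+1$ sub-simplices; meanwhile, on each sub-simplex $T'$, the function $\vec w' \cdot \phi_t$ is affine in the barycentric coordinates of $T'$. The key observation is that these two affine functions agree at every vertex of each sub-simplex: at the original vertices $q_{i_l}$ shared with $T$ the embeddings assign the same single-vertex weight, and at the new apex $q_t$ we have $\vec w' \cdot \phi_t(q_t) = w_t$ on one side, while on the other side $\vec w'' \cdot \phi_{t-1}(q_t) = \sum_l \mu_l w''_{i_l}$ because $q_t$ is embedded using its barycentric coordinates in $T$. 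These two quantities coincide by the very choice of $w_t$. Since two affine functions on a simplex that agree at all $d+1$ vertices are identical, the equality holds throughout $T$, completing the inductive step.

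The main obstacle is ensuring consistency across the newly introduced partition boundaries inside $T$, since the refined embedding $\phi_t$ is only piecewise linear there while $\phi_{t-1}$ was affine on all of $T$. I expect this to be handled cleanly by the observation that $w_t$ is deliberately set equal to the value $\vec w'' \cdot \phi_{t-1}(q_t)$, so the refined piecewise-affine function merely subdivides the old affine piece on $T$ without actually changing its values. With that, the inductive hypothesis together with the single-step extension proves the theorem, and the construction makes the prefix property manifest.
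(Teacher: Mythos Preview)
Your proposal is correct and follows essentially the same approach as the paper: an induction on the number of added vertices, with the inductive step defining the new weight $w_t$ as the barycentric combination $\sum_l \mu_l w''_{i_l}$ and then verifying that the signed value $\vec w'\cdot\phi_t(\vec x)$ is unchanged. The only cosmetic difference is that the paper verifies this via an explicit coordinate-change identity (their Lemma relating $\alpha_{i,t}$ to $\alpha_{i,t+1}$) and a direct algebraic computation, whereas you argue that two affine functions on each sub-simplex agreeing at all $d+1$ vertices must coincide; these are equivalent routes to the same conclusion.
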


In order to prove the theorem we must first demonstrate the relationship between
coefficients before and after a simplex split.

\begin{definition}
 Let $B_t$ be a nested barycentric system, with $\{q_0, \ldots, q_t\}$, 
 Let the new splitting point be $q_{t+1}$. Since $q_{t+1} \in B_t$, it can be written as:
\begin{equation}
 \label{step_t}
  q_{t+1} = \sum_{i=1}^{t} \beta_{i}\; \vv{q}_{i}
\end{equation}
We define $\beta_{i}$ to be the coefficients of the new coordinate of step $t+1$
using the coordinate system at step $t$.
\end{definition}

\begin{lemma}
\label{lem:1}
For a given data point $\vv p$ the connection between the coefficients of step $t$ and $t+1$ is:
\begin{equation}
 \alpha_{i,t}(\vv p)=\alpha_{i,t+1}(\vv p)+\beta_{i} \alpha_{t+1,t+1}(\vv p)
\end{equation}
\end{lemma}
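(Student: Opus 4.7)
The plan is to equate the two barycentric representations of $\vec p$ (at steps $t$ and $t+1$), substitute the expansion \eqref{step_t} of $\vec q_{t+1}$, and read off matching coefficients. Concretely, starting from
\[
\vec p \;=\; \sum_{i=0}^{t+1} \alpha_{i,t+1}(\vec p)\,\vec q_i
\;=\; \sum_{i=0}^{t} \alpha_{i,t+1}(\vec p)\,\vec q_i \;+\; \alpha_{t+1,t+1}(\vec p) \sum_{i=0}^{t} \beta_i \vec q_i,
\]
and regrouping, one obtains
\[
\vec p \;=\; \sum_{i=0}^{t}\bigl(\alpha_{i,t+1}(\vec p)+\beta_i\,\alpha_{t+1,t+1}(\vec p)\bigr)\vec q_i,
\]
which is to be matched against $\vec p = \sum_{i=0}^{t}\alpha_{i,t}(\vec p)\,\vec q_i$.

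The subtle point, and what I expect to be the main obstacle, is that once $t\ge d+1$ the vertices $\vec q_0,\ldots,\vec q_t$ are no longer affinely independent, so matching coefficients across the full vertex set is not a priori valid. The remedy is to use the sparsity built into $\phi_t$: only the $d+1$ coefficients indexed by the vertices of the simplex $\sigma$ surrounding $\vec p$ at step $t$ are nonzero, and similarly only the vertices of the sub-simplex $\sigma'$ surrounding $\vec p$ at step $t+1$ support nonzero entries of $\phi_{t+1}(\vec p)$. I would then split into two cases. If $\vec q_{t+1}\notin\sigma$, then $\sigma$ is unchanged by the refinement, $\alpha_{t+1,t+1}(\vec p)=0$, and the identity collapses to $\alpha_{i,t}(\vec p)=\alpha_{i,t+1}(\vec p)$, which holds trivially. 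If instead $\vec q_{t+1}\in\sigma$, then by the uniqueness of the barycentric expansion of $\vec q_{t+1}$ inside $\sigma$ the nonzero $\beta_i$ are supported on $\sigma$'s vertices, and $\sigma'$ consists of $\vec q_{t+1}$ together with $d$ of $\sigma$'s vertices. All terms in the displayed identity therefore live in the affine span of the $d+1$ affinely independent vertices of $\sigma$, where barycentric coordinates are unique, so matching the coefficient of each $\vec q_i$ (including the vertex of $\sigma$ excluded from $\sigma'$, for which $\alpha_{i,t+1}(\vec p)=0$) yields exactly the claimed relation.

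In summary, the proof is a one-line algebraic substitution; the only real work is recognizing that the non-uniqueness of representations over the enlarged vertex set is harmless because, by sparsity, the whole identity reduces to one taking place inside the single simplex $\sigma$ where uniqueness of barycentric coordinates is immediate.
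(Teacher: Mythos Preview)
Your proof is correct and follows essentially the same route as the paper: expand $\vec q_{t+1}$ via \eqref{step_t}, regroup, and match coefficients against the step-$t$ representation. The only difference is that you explicitly justify the coefficient-matching step by localizing to the single simplex $\sigma$ containing $\vec p$ (where affine independence and hence uniqueness hold), whereas the paper simply invokes uniqueness of the barycentric representation without further comment.
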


For the simplicity of the notation in this proof we will use $\alpha_i$ instead of $\alpha_i(\vv p)$ , and  $\alpha^*$ instead of $\alpha_{t+1,t+1}(\vv p)$.
\begin{proof}
 The data point $\vv p$ at step $t$ can be written as:
 \begin{equation}\label{eq:exp1}
 \vv p =\sum_{i=1}^{t} {\alpha}_{i,t} \vv{q}_{i} 
\end{equation} 

The point $\vv p$ at step $t+1$ is written as:
 \begin{equation}
 \label{step_t+1}
 \vv p =\sum_{i=1}^{t+1} {\alpha}_{i,t+1} \vv{q}_{i} = \sum_{i=0}^t \alpha_{i,t+1} \vv{q}_{i} + \alpha^{*} \vv{q}_{t+1}
\end{equation}

Combining equations \ref{step_t} and \ref{step_t+1}, we derive:
 
 \begin{equation}\label{eq:exp2}
 \vv p =\sum_{i=1}^{t} (\alpha_{i,t+1}+ \beta_{i} \alpha^{*}) \vv{q}_{i}
\end{equation}

Since the barycentric representation is unique,
equations \ref{eq:exp1} and \ref{eq:exp2} together imply:
\begin{equation}
 \alpha_{i,t}=\alpha_{i,t+1}+\beta_{i} \alpha^{*}
\end{equation}
\end{proof}

We can now prove theorem \ref{th:eq} by induction:

\begin{proof}
For a given polytope $P$ with a set of weights $\vec w_t$  at system $B_t$,  such that $P= \{\vec x\in S : \vec w_t\cdot \phi_t(\vec x) \ge 0\}$,
and $B_{t} \subset B_{t+1}$, 
we choose the set of weights $\vec w_{t+1}$ for $B_{t+1}$ as follows:
The first $t$ weights of $B_{t+1}$ are the same as for $B_{t}$
($w_{i,t} = w_{i,t+1} \quad \forall i < t+1$), 
and $w_{t+1,t+1} = \sum_i \beta_i w_i(t)$.
Then the scaled distance of every given point represented in $B_t$ from the hyperplane $\gamma_t = \vec w_t \phi_t(\vec x)$, is the same as 
the scaled distance of the point represented in $B_{t+1} :  \gamma_{t+1}= \vec w_{t+1} \phi_{t+1}(\vec x)$,
and specifically the polytope $P$ remains the same.
Using Lemma \ref{lem:1} we have:
\begin{equation*}
\begin{aligned}
\gamma_{t+1}
&=  \sum_{i=0}^{t+1} \alpha_{i,t+1} w_{i,t+1}	\\
&=  \sum_{i=0}^t \alpha_{i,t+1} w_{i,t+1}+ \alpha^* w_{t+1,t+1}	\\
&=  \sum_{i=0}^t (\alpha_{i,t} - \beta_i \alpha^*) w_{i,t+1}+\alpha^* w_{t+1,t+1}	\\
&=  \underbrace{\sum_{i=0}^t \alpha_{i,t} w_{i,t}}_{ = \gamma_t} -  \alpha^* {\sum_{i=0}^t  \beta_i w_{i,t}} +{\alpha^* w_{t+1,t+1}}	\\
&=  \gamma_t +  \alpha^* \underbrace{(w_{t+1,t+1} - \sum_{i=0}^t  \beta_i w_{i,t})}_{= 0}	\\
&=  \gamma_t.
\end{aligned}
\end{equation*}
\end{proof}

\section{Approximating a convex body}\label{sec:approx}

In this section we show that the nested barycentric coordinate system (NBCS)
can represent
an arbitrarily close approximation to any convex body. 
As stated the NBCS produces a (not necessarily convex) piece-wise linear classifier.
In fact, this method can approximate multiple convex bodies.
For simplicity, we focus on the case of a single convex body, 
and demonstrate how our method approximates it.
This will be done by  placing split points at the barycenters of 
their containing simplices, where the barycenter of a simplex with vertices 
$p_0, \ldots, p_{d}$ is given by $(p_0 + \cdots + p_d)/(d+1)$.

In order to state our result formally, we introduce some notation: 
Given a point $p\in \R^d$ and a parameter $\eps>0$, 
let $B_\eps(p) = \{q\in\R^d: \|q-p\|_2\le \eps\}$ be the ball of radius $\eps$ centered at $p$. 
Given a set $X\subseteq \R^d$, let $X^{(-\eps)} = \{p\in X:B_\eps(p)\subseteq X\}$ 
be the set of all points of $X$ that are at distance at least $\eps$ from the boundary of $X$. 
Recall that $S$ denotes the unit simplex.

\begin{theorem}\label{thm:approx}
Let $P\subseteq S$ be a given convex body of diameter 1, and let $0<\eps<1$ be given. 
Then there exists a nested system $B_t$, obtained by always placing split points at the 
barycenters of their containing simplices, and a corresponding set of weights $\vec w$, 
such that
\begin{equation}\label{eq_R}
\tilde P= \{\vec x\in S : \vec w\cdot \phi_t(\vec x) \ge 0\}
\end{equation}
satisfies the following:
\begin{enumerate}
\item $\vol(\tilde P\setminus P) < \eps\vol(S)$.
\item $\tilde P^{(-\eps)} \subseteq P \subseteq \tilde P$.
\end{enumerate}
\end{theorem}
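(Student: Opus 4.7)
The plan is to fix a threshold $\delta=\delta(\eps)$, refine $S$ by repeated stellar (barycenter) subdivisions of every leaf $T$ with $d(T,\partial P)\le\delta$ until each such leaf has diameter at most $\delta$, and then assign weights by $w_i=+1$ if $d(q_i,P)\le\delta$ and $w_i=-1$ otherwise. Because $\vec w\cdot\phi_t(\vec x)$ is linear within each leaf and continuous across shared faces, $\tilde P$ is the zero-superlevel set of a piecewise-linear function on $S$ whose vertex values are exactly $\pm 1$, which reduces the analysis to tracking vertex signs on each leaf.

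For the inclusion $P\subseteq\tilde P$, I would argue that any $\vec x\in P$ lies in a leaf $T$; if $T$ does not meet $\partial P$ then every vertex of $T$ lies in $P$ and carries weight $+1$, while if $T$ does meet $\partial P$ it has been refined to diameter $\le\delta$, so every vertex $q_i$ satisfies $d(q_i,P)\le d(q_i,\vec x)\le\mathrm{diam}(T)\le\delta$ and again carries weight $+1$; in either case the classifier is identically $+1$ on $T$. For the containment of $\tilde P$ in the $2\delta$-neighborhood of $P$, observe that any leaf $T$ lying outside $P$ with $d(T,\partial P)>\delta$ has all vertices at distance $>\delta$ from $P$ and hence classifier $\equiv-1$; so if $\vec x\in\tilde P\setminus P$, its leaf must satisfy $d(T,\partial P)\le\delta$, which together with $\mathrm{diam}(T)\le\delta$ forces $d(\vec x,P)=d(\vec x,\partial P)\le 2\delta$. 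The volume bound $\vol(\tilde P\setminus P)<\eps\vol(S)$ then follows from the standard surface-strip estimate $\vol(\{p:d(p,P)\le r\}\setminus P)=O(r\cdot\surf(P))$ with $r=2\delta$, and the erosion bound $\tilde P^{(-\eps)}\subseteq P$ follows from the same containment by choosing $2\delta\le\eps$ and invoking the monotonicity of erosion together with the fact that for convex $P$ the $r$-erosion of its $r$-neighborhood equals $P$.

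The principal technical obstacle is guaranteeing that finitely many stellar subdivisions actually drive every leaf near $\partial P$ to diameter $\le\delta$. A single stellar split does not decrease the diameters of the $d-1$ children that retain both endpoints of a longest edge, so naive iteration leaves behind arbitrarily thin ``stubborn'' long-edge simplices. I would dispose of these by noting that the total volume of stubborn leaves at refinement depth $k$ shrinks as $((d-1)/(d+1))^k$ and so can be driven below any prescribed fraction of $\vol(S)$, with the residue absorbed into the bound on $\vol(\tilde P\setminus P)$; the remaining, genuinely small boundary leaves reach diameter $\le\delta$ in $O(\log(1/\delta))$ additional rounds. Choosing $\delta$ and the refinement depth as explicit functions of $\eps$ (with constants depending on the fixed dimension $d$) closes both bounds simultaneously, and since each stellar split adds exactly one new vertex, the refinement is already in the one-vertex-per-step form required by the nested system $B_t$ of Section~\ref{sec:bary}.
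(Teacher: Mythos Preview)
Your overall strategy is close in spirit to the paper's, but there are two genuine gaps.

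\textbf{The inclusion $P\subseteq\tilde P$.} You correctly identify that stellar subdivision leaves behind ``stubborn'' leaves of large diameter, and you propose to absorb their total volume into the bound for item~1. That is fine for item~1, but it does not rescue item~2: a stubborn leaf $T$ meeting $\partial P$ may have a vertex $q_i$ with $d(q_i,P)>\delta$ (hence $w_i=-1$) while still containing points of $P$; then $\vec w\cdot\phi_t(\vec x)$ can be negative there and $\vec x\notin\tilde P$. Since $P\subseteq\tilde P$ is an \emph{exact} inclusion, a small volume of violations is not acceptable, and there is no way to force all vertices of a stubborn leaf to carry weight $+1$ without contaminating the many other leaves that share those vertices. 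The paper avoids this by a different weight rule: it starts with $w_0=\cdots=w_d=-1$ and, at each stage, assigns to every new barycenter the \emph{smallest} weight that keeps $\tilde P\supseteq P$ (by Theorem~\ref{th:eq} the neutral choice $w_{\mathrm{avg}}$ preserves $\tilde P$, so this minimum exists and is $\le w_{\mathrm{avg}}$, taking $-\infty$ when the simplex misses $P$). Thus $P\subseteq\tilde P$ holds \emph{by construction} at every stage, independent of any diameter control. The other half of item~2, $\tilde P^{(-\eps)}\subseteq P$, is then derived from item~1 by a ball-volume argument (a ball $B_\eps(p)$ around a putative $p\in\tilde P^{(-\eps)}\setminus P$ lies in $\tilde P$ and, by convexity of $P$, has more than half its volume outside $P$), rather than via your containment $\tilde P\subseteq P^{(+2\delta)}$, which likewise breaks on stubborn leaves.

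\textbf{The stubborn-volume rate.} The quantity $((d-1)/(d+1))^k$ is the volume fraction of descendants retaining one \emph{fixed} longest edge, not the fraction with large diameter. If the parent is a regular simplex, every edge is longest, each child after one stellar split still contains $\binom{d}{2}\ge 1$ original edges, and hence \emph{every} child has the full diameter; the one-step shrinkage in your sense is zero. The paper handles this with Lemmas~\ref{lemma_ctr_dist}--\ref{lemma_few_long}: only after $d$ successive splits are there descendants sharing at most one vertex with the ancestor, and those (at least $(d+1)!$ out of $(d+1)^d$) have diameter shrunk by the factor $d/(d+1)$; iterating gives the correct bound on the volume of long simplices.
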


\paragraph{Proof}
The construction proceeds in stages $i=0, 1, \ldots, s$.
(Below, we will take $s = 2^{O(d)}\ln^2(1/\eps)$.) 
At stage $0$ the only points present are the vertices of $S$. 
At each stage $i$, $i\ge 1$, a new split point is placed at the barycenter of each existing simplex,
and the final construction is called the $s$-stage \emph{uniform subdivision} of $S$.
Let $A_i$ be the set of simplices present at stage $i$,
and clearly $|A_i| = (d+1)^i$. 
Note that all simplices in $A_i$ have the same volume.

The weights $w_i$ are assigned as follows: 
Initially, vertices $q_0, \ldots, q_d$ of $S$ are assigned weights $w_0 = \cdots = w_d = -1$. 
At each stage $i\ge 1$, each new split point is given the smallest possible 
weight that ensures $\tilde P\supseteq P$, where $\tilde P$ is given by (\ref{eq_R}). 
Once a weight is assigned to a point, it is never changed again. 
In other words, for those points of $B_{i+1}$ that already belonged to $B_i$, their weights at $B_{i+1}$ are the same as their weights at $B_i$.

Let $S'\in A_i$ be a simplex with vertices $q_{i_0}, \ldots, q_{i_d}$ and weights $w_{i_0}, \ldots, w_{i_d}$, respectively. 
Let $q' = (q_{i_0} + \cdots + q_{i_d})/(d+1)$ be the barycenter of $S'$. 
By Theorem \ref{th:eq}, if $q'$ is assigned weight $w_{\mathrm{avg}} = (w_{i_0} + \cdots w_{i_d})/(d+1)$, 
then $\tilde P\cap S'$ remains unchanged. 
Hence, the weight $w'$ that will be assigned to $q'$ by our construction will satisfy $w'\le w_{\mathrm{avg}}$. 
And therefore, at each stage, $\tilde P$ only shrinks.
If at stage $i$ a certain simplex $S'\in A_i$ satisfies $S'\cap P =\emptyset$, 
then at stage $i+1$ the barycenter of $S'$ will be assigned weight $-\infty$, 
so that the interior of $S'$ will lie completely outside of $\tilde P$.

Let us denote by $\tilde P_s$ the region $\tilde P$ produced by this construction after stage $s$. 
(See Figure~\ref{fig_uniform} for an illustration in the plane.)
We will now prove that, if $s$ is made large enough, then $\tilde P_s$ approximates the given convex body $P$ arbitrarily well, 
as stated in the theorem.

\begin{figure}
\centerline{\includegraphics[width=\textwidth]{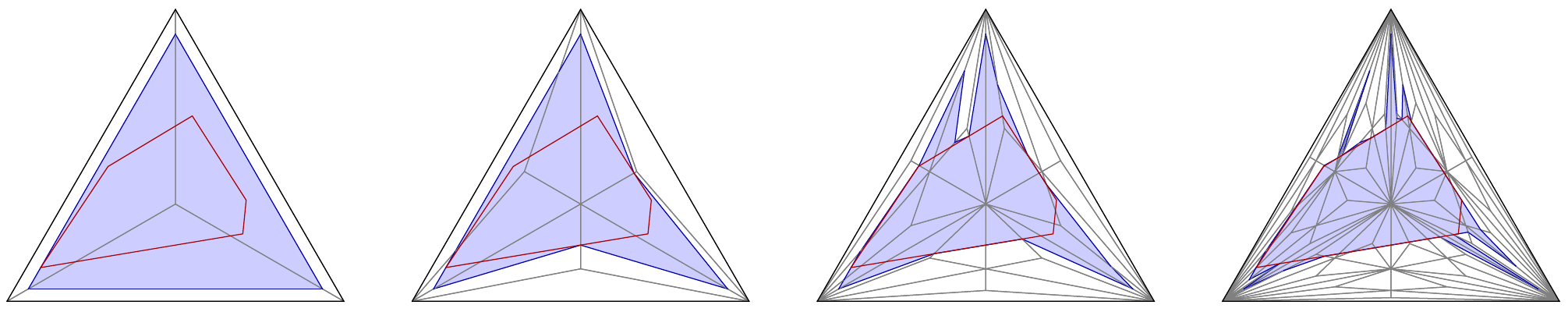}}
\caption{\label{fig_uniform}Four stages of the approximation of a given convex polygon in the plane.}
\end{figure}

The \emph{diameter} of a compact subset of $\R^d$ is the maximum distance between two points in the set. 
In particular, the diameter of a simplex is the largest distance between two vertices of the simplex.

\begin{lemma}\label{lemma_ctr_dist}
Let $S'$ be a simplex with vertices $p_0, \ldots, p_d$, let $c$ be the diameter of $S'$, and let $q$ be the barycenter of $S'$. 
Then the distance between $q$ and any vertex $p_i$ is at most $cd/(d+1)$.
\end{lemma}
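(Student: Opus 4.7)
The plan is to carry out a direct triangle-inequality computation, since the barycenter has a simple closed form as the average of the vertices. First I would write $q - p_i = \frac{1}{d+1}\sum_{j=0}^d (p_j - p_i) = \frac{1}{d+1}\sum_{j\ne i} (p_j - p_i)$, where the $j=i$ term vanishes. This is the key algebraic rearrangement: even though the barycenter is a centroid of $d+1$ points, only $d$ of them contribute nontrivially to the displacement from $p_i$.

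Next I would apply the triangle inequality to obtain
\[
\|q - p_i\|_2 \;\le\; \frac{1}{d+1}\sum_{j\ne i}\|p_j - p_i\|_2.
\]
By the definition of the diameter, each term $\|p_j - p_i\|_2$ is bounded by $c$, and there are exactly $d$ such terms, which yields $\|q - p_i\|_2 \le cd/(d+1)$ and closes the lemma.

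I do not anticipate a real obstacle here; the bound is tight (attained in the limit where one vertex is far from all the others which cluster together), and the argument requires no geometry beyond the triangle inequality and the definition of the diameter. The only thing to be careful about is counting: the sum has $d$ nonzero terms, not $d+1$, which is precisely what produces the $d/(d+1)$ factor that the later volume/approximation argument presumably exploits (a contraction factor strictly less than $1$ at each subdivision stage).
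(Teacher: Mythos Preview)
Your proof is correct and matches the paper's approach: the paper fixes $p_i=\vec 0$ and argues that $\|q\|=\frac{1}{d+1}\bigl\|\sum_{j\ne i}p_j\bigr\|$ is maximized when all $p_j$ are aligned with norm $c$, which is exactly the triangle-inequality bound you wrote out explicitly. If anything, your phrasing is slightly cleaner since it skips the optimization framing.
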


\begin{proof}
Fix $p_i = \vec 0$ for concreteness. Then, under the constraints $\|p_j\|_2 \le c$ for $j\neq i$, 
the distance between $q$ and $p_i$ is maximized by letting $p_j = (c, 0, \ldots 0)$ for all $j\neq i$, 
which yields the claimed distance.
\end{proof}


\begin{lemma}\label{lemma_diam}
Let $S'$ be a simplex with diameter $c$. Let $A$ be the collection of the $(d+1)^d$ simplices obtained by a $d$-stage uniform subdivision of $S'$. Then there are at least $(d+1)!$ simplices in $A$ with diameter at most $cd/(d+1)$.
\end{lemma}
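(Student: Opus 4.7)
The plan is to exhibit $(d+1)!$ sub-simplices of the $d$-stage uniform subdivision of $S'$, each with diameter at most $cd/(d+1)$, via a greedy descent through the subdivision tree in which we always omit an \emph{original} vertex of $S'$ (i.e., never an inserted barycenter).

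Formally, set $T_0=S'$ and let $b_j$ denote the barycenter of $T_{j-1}$ (the split point introduced at stage $j$). Assuming inductively that $T_{k-1}$ has $d-k+2$ original vertices of $S'$ together with $b_1,\ldots,b_{k-1}$, the stage-$k$ subdivision splits $T_{k-1}$ into $d+1$ sub-simplices, each obtained by omitting one vertex of $T_{k-1}$ and adjoining $b_k$. I would call a selection \emph{good} if the omitted vertex is one of the $d-k+2$ original vertices of $T_{k-1}$. This yields $d-k+2$ good choices at stage $k$, for a total of $(d+1)\cdot d\cdot(d-1)\cdots 2=(d+1)!$ good selection sequences. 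A short check confirms that distinct sequences produce distinct final sub-simplices $T_d$, since each $b_k$ depends functionally on the original vertices omitted prior to stage $k$.

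The heart of the argument is the diameter bound on a good $T_d$, whose vertex set is $\{b_1,\ldots,b_d,p\}$ with $p$ the unique surviving original vertex. The crucial observation is that under the good rule each $b_j$ persists as a vertex of every $T_m$ with $m\ge j$; in particular, both $p$ and each $b_j$ with $j<k$ are vertices of $T_{k-1}$. Since $T_{k-1}\subseteq S'$ has diameter at most $c$, Lemma~\ref{lemma_ctr_dist} applied to $T_{k-1}$ yields $\|b_k-u\|_2\le cd/(d+1)$ for every vertex $u$ of $T_{k-1}$. Taking $u\in\{b_1,\ldots,b_{k-1},p\}$ bounds every edge of $T_d$ that is incident to $b_k$, and ranging $k$ over $\{1,\ldots,d\}$ exhausts every pair of vertices of $T_d$, so the diameter of $T_d$ is at most $cd/(d+1)$.

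The main obstacle I expect is the bookkeeping: one must check that the good rule is always well-defined (at stage $k\le d$ there are $d-k+2\ge 2$ original vertices available to omit) and that the $(d+1)!$ good terminal simplices are genuinely pairwise distinct. Both facts are combinatorial consequences of the rule, but they are precisely what enables the simultaneous bound $\|b_j-b_k\|_2\le cd/(d+1)$ without ever having to control distances between two original vertices, which can remain as large as $c$ throughout the subdivision.
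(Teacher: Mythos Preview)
Your argument is correct and follows essentially the same approach as the paper: both count the simplices reached by always dropping an \emph{original} vertex of $S'$ at each of the $d$ stages (yielding $(d+1)\cdot d\cdots 2=(d+1)!$ simplices with exactly one surviving original vertex), and both invoke Lemma~\ref{lemma_ctr_dist} for the diameter bound. Your write-up is in fact more explicit than the paper's on why the edge lengths are controlled---you spell out that under the good rule every $b_j$ and the survivor $p$ are vertices of $T_{k-1}$, so Lemma~\ref{lemma_ctr_dist} applies to every pair---whereas the paper compresses this into a single sentence.
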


\begin{proof}
By Lemma~\ref{lemma_ctr_dist}, every simplex in $A$ that contains at most one vertex of $S'$ will have diameter at most $cd/(d+1)$. Each time a simplex $S''$ is subdivided into $d+1$ simplices by an interior point $q$, the new simplices share only $d$ of their vertices with $S''$. Hence, at stage $1$ of the subdivision of $S'$, there are $d+1$ simplices that share only $d$ vertices with $S'$; at stage $2$, there are $(d+1)d$ simplices that share only $d-1$ vertices with $S'$; and so on, until at stage $d$ there are $(d+1)d\cdots 2=(d+1)!$ simplices that share only one vertex with $S'$.
\end{proof}

Recall that $A_i$ denotes the collection of simplices present in the $i$-stage uniform subdivision of $S$.

\begin{lemma}\label{lemma_few_long}
Let $k,z$ be integers, and set $s = zkd$. Then at most a $\bigl(z(1-e^{-d})^k\bigr)$-fraction of the simplices in $A_s$ have diameter larger than $(d/(d+1))^z$.
\end{lemma}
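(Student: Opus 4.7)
The plan is to view the $s = zkd$ refinement stages as $zk$ consecutive \emph{blocks} of $d$ stages each, so that Lemma~\ref{lemma_diam} can be applied block by block. For each simplex $T \in A_s$, track its chain of ancestors $S = T_0, T_1, \ldots, T_{zk} = T$, where $T_j$ is the unique simplex in $A_{jd}$ whose refinement contains $T$. I would call block $j$ a \emph{shrinking block for $T$} if the diameter of $T_j$ is at most $d/(d+1)$ times that of $T_{j-1}$. Since refinement never increases diameters, if at least $z$ of the $zk$ blocks are shrinking for $T$, then $T$ has diameter at most $(d/(d+1))^z$. Equivalently, every simplex with diameter strictly greater than $(d/(d+1))^z$ has fewer than $z$ shrinking blocks.

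Next, the plan is to partition the $zk$ blocks into $z$ consecutive groups $G_1, \ldots, G_z$ of $k$ blocks each. By pigeonhole, any simplex with fewer than $z$ shrinking blocks in total must have at least one group $G_i$ containing no shrinking block for it (otherwise picking one shrinking block per group would already yield $z$ of them). A union bound over the $z$ groups then reduces the problem to showing that, for each fixed $i$, the fraction of simplices in $A_s$ whose ancestry contains no shrinking block inside $G_i$ is at most $(1-e^{-d})^k$.

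For a fixed $G_i$, I would bound this fraction by a short induction on $k$. Within any one block, Lemma~\ref{lemma_diam} says that the fraction of non-shrinking descendants of any parent simplex is at most $1 - (d+1)!/(d+1)^d$. Because every simplex in any $A_{jd}$ has exactly $(d+1)^d$ descendants at the next block, these per-parent fractions multiply cleanly across the $k$ blocks of $G_i$, giving at most $(1-(d+1)!/(d+1)^d)^k$ at the end of $G_i$; passing to further descendants in $A_s$ scales the numerator and denominator by the same factor, so this ratio lifts unchanged to a fraction of $A_s$. A Stirling estimate yields $(d+1)!/(d+1)^d \ge e^{-d}$ for $d \ge 1$, giving the desired $(1 - e^{-d})^k$ bound, and summing over the $z$ groups produces the claimed fraction $z(1-e^{-d})^k$. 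The hard part is really only bookkeeping: verifying that the ``no shrinking in $G_i$'' event depends only on the ancestor at the end of $G_i$, so that the per-block fractions multiply and lift to the same fraction of $A_s$ independent of what happens in the remaining blocks. This is where the perfect uniformity of the subdivision tree is essential.
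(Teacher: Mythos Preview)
Your proposal is correct and follows essentially the same approach as the paper: both group the $zkd$ stages into $z$ consecutive super-blocks of $kd$ stages, use Lemma~\ref{lemma_diam} iterated $k$ times within each super-block to bound the non-shrinking fraction by $\alpha=(1-(d+1)!/(d+1)^d)^k\le(1-e^{-d})^k$, and then take a union bound over the $z$ super-blocks. The only cosmetic difference is that the paper phrases the union bound as a recursive estimate $\alpha+(1-\alpha)\alpha+\cdots<z\alpha$, whereas you phrase it via pigeonhole on ``shrinking blocks''; the content is identical.
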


\begin{proof}
By repeated application of Lemma~\ref{lemma_diam}. After $kd$ stages, at most an $\alpha$-fraction of the simplices in $A_{kd}$ have diameter larger than $d/(d+1)$, for $\alpha = \left(1-\frac{(d+1)!}{(d+1)^d}\right)^k$. All the other simplices have diameter at most $d/(d+1)$. Of the latter simplices, after $kd$ more stages, at most an $\alpha$-fraction of their descendants have diameter larger than $(d/(d+1))^2$. Hence, in $A_{2kd}$, the fraction of simplices with diameter larger than $(d/(d+1))^2$ is at most $\alpha + (1-\alpha)\alpha < 2\alpha$. And so on. In $A_{zkd}$, the fraction of simplices with diameter larger than $(d/(d+1))^z$ is at most $z\alpha$. Since $(d+1)!/(d+1)^d > e^{-d}$ for all $d$, the claim follows.
\end{proof}

Now, given $\eps$, let $\rho = \eps / (2\sqrt{2}d^2)$. Choose $z$ minimally so that $(d/(d+1))^z \le \rho$,
 and then choose $k$ minimally so that $z(1-e^{-d})^k \le \eps /2$. Let $s=zkd$. (Hence, we have $s\le c^d\ln^2(1/\eps)$ for some $c$.)
Let $Z_1$ be the region surrounding $P$ that is at distance at most $\rho$ from $P$, 
and let $Z_2$ be the union of all the simplices in $A_s$ with diameter larger than $\rho$. 
By the choice of $s$, every point in $\tilde P_s \setminus P$ belongs to 
$Z_1\cup Z_2$. Let us bound each of $\vol(Z_1)$ and $\vol(Z_2)$.

As $\rho\to 0$ (keeping $P$ fixed) we have $\vol(Z_1) \le (1+o(1))\rho\surf(P)$. Furthermore, $P$ and $S$ are both convex with $P\subseteq S$, so $\surf(P)\le \surf(S)$. Since $S = S_d$ where $S_d\subset\R^d$ is a regular simplex of unit side-length, we have $\vol(S_d) = \sqrt{d+1}/(d!\sqrt{2^d})$ and $\surf(S_d) = (d+1)\vol(S_{d-1}) \approx \sqrt{2}d^2\vol(S_d)$. Hence, by the choice of $\rho$, we have $\vol(Z_1) \le (\eps/2)\vol(S)$.
By Lemma~\ref{lemma_few_long}, we also have $\vol(Z_2) \le (\eps/2)\vol(S)$. 
Hence, $\vol(\tilde P_s\setminus P)\le \eps\vol(S)$, 
and the first item follows.

For the second item, by construction $P \subseteq \tilde P$.
Now given a parameter $\delta>0$, apply the first part of the theorem with $\eps = \vol(B_\delta)/(2\vol(S))$, where $\vol(B_\delta) \approx \delta^d 
\pi^{d/2} / (d/2)!$ is the volume of a $d$-dimensional ball of radius $\delta$. 
(A calculation shows that $\eps \ge \delta^d$, so it suffices to take $s=(c')^d\ln^2(1/\delta)$ for an appropriate constant $c'$.)
Suppose for a contradiction that there exists a point $p\in \tilde P_s^{(-\delta)}$ that is outside of $P$. Then the ball $B=B_\delta(p)$ is 
contained in $\tilde P$. But since $P$ is convex, more than half of $B$ is outside of $P$. Hence, $\vol(\tilde P_s\setminus P) > \vol(B)/2 = 
\eps\vol(S)$, contradicting the first part of the theorem. This implies that $\tilde P_s^{(-\delta)}\subseteq P$, concluding the second item and 
the proof of Theorem~\ref{thm:approx}.


\section{Learning algorithms}\label{sec:learning}

In Section \ref{sec:approx}, we demonstrated that the uniform subdivision embedding,
coupled with an appropriate choice of weights, can represent an approximation to any given convex body.
This motivates an embedding technique for a linear classifier.

For some parameter $q$ (determined by cross validation), 
our classification algorithm produces a $q$-stage uniform subdivision: 
Beginning with a single simplex covering the entire space, at each stage we add 
to the system the barycentric center of each simplex, 
thereby splitting all simplices into $d+1$ sub-simplices.
We call a set of $d+1$ simplices formed by a split {\em siblings}.
The procedure stops after $q$ stages, having produced $(d+1)^q$ simplices. 
We note that there is nothing to be gained by splitting an empty simplex, so the
algorithm may ignore these; then an empty simplex must have a sibling  
that contains points, and since a simplex has $d$ siblings, we have
that the total number of simplices is not greater than $\min \{(d+1)^q, dnq \}$.
Parameter $q$ is analogous to depth parameter $s$ of Lemma \ref{lemma_few_long};
however, we have consistently observed by empirical cross-validation
that it suffices to take $q$ as a very small constant (at most 5), and so
we stipulate in our algorithm that $q$ be bound by a small universal constant.


Having computed the nested coordinate system, we use it
to embed all points into high-dimensional space.
To find an appropriate weight assignment $\vec w$ for the simplex points, 
we compute a linear classifier on the embedded space to separate the data.
A linear classifier takes the form 
$h(\vec x) = \sign ( \vec w \cdot \vec x)$,
and this $\vec w$ serves as our weight vector for the embedding.
We use soft SVM as our linear classifier, and note that
the training phase
can be executed 
in time $O(dn)$ on $(d+1)$-sparse vectors \citep{joachims2006training}.
The total runtime of the algorithm is bounded by the cost of executing the sparse SVM
plus the total number of simplex points, that is
$O(\min \{d(d+1)^q + dn, d^2qn \}
= \min \{ d^{O(1)} + dn, O(d^2n) \}$).

To classify a new point, we can simply search top-bottom for its lowest containing 
simplex: We begin at the initial simplex, investigate which of its $d$ sub-simplices
contains the query point, and iterate on that simplex. This can all be done in time 
$O(qd^2) = O(d^2)$.
After bounding the run time, we want to bound the out of sample error:

\begin{theorem}\label{thm:bounds}
  If our classifier achieves sample error $\hat R$ with margin $\gamma$
  (i.e., $\hat R$ is the fraction of the points whose margin is less than $\gamma$)
on a sample of size $n$ after stopping at stage $q$, its generalization error $R$
is bounded by
\begin{equation}
  \hat R+O(1/(\gamma\sqrt n)+\sqrt{\log(q/\delta)/n})
\end{equation}  
with probability at least $1-\delta$.
\end{theorem}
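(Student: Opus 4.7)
The plan is to reduce the theorem to a standard margin-based generalization bound for linear classifiers in a bounded inner product space, plus a union bound over the choice of stage $q$.

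The key observation is that for every stage $t$ and every data point $\vec x$, the embedding $\phi_t(\vec x)$ has $L_2$-norm at most $1$. Indeed, by construction the nonzero coordinates of $\phi_t(\vec x)$ are the barycentric coefficients of the simplex containing $\vec x$, which are nonnegative and sum to $1$; hence
\[
\|\phi_t(\vec x)\|_2^2 \;=\; \sum_{i} \alpha_i(\vec x)^2 \;\le\; \Bigl(\sum_i \alpha_i(\vec x)\Bigr)^{\!2} \;=\; 1.
\]
So after embedding, the entire sample lies inside the unit Euclidean ball, independent of $q$ and $d$.

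Once we have this norm bound, I would invoke a standard margin-Rademacher generalization bound for the class of linear classifiers $\vec x \mapsto \sgn(\vec w \cdot \vec x)$ on data of Euclidean norm at most $1$ (e.g.\ Bartlett--Mendelson, or the Koltchinskii--Panchenko margin bound). For any fixed $q$ and any $\gamma>0$, this yields: with probability at least $1-\delta'$ over the sample, every linear classifier $\vec w$ in the embedded space satisfies
\[
R \;\le\; \hat R \;+\; O\!\left(\frac{1}{\gamma\sqrt{n}}\right) \;+\; O\!\left(\sqrt{\frac{\log(1/\delta')}{n}}\right),
\]
where $\hat R$ is the fraction of sample points with margin less than $\gamma$. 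Applied to the SVM solution on $\phi_q(\vec x)$, this already matches the target bound for a fixed $q$.

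To accommodate the fact that $q$ is selected by cross-validation rather than fixed in advance, I would apply the above bound simultaneously for every $q' \in \{1,\ldots,q\}$ with confidence parameter $\delta' = \delta/q$ (or $6\delta/(\pi^2 q'^2)$ if one wants $q$ unbounded), and take a union bound. The failure probability is still at most $\delta$, and the additional logarithmic factor is absorbed into the $\sqrt{\log(q/\delta)/n}$ term in the statement. The main obstacle is essentially bookkeeping: ensuring that the margin bound quoted really does give the $O(1/(\gamma\sqrt n))$ scaling without any hidden dependence on the embedding dimension $(d+1)^q$, which is exactly why the unit-norm observation above is crucial.
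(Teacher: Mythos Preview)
Your proposal is correct and matches the paper's own argument: the paper's one-line justification invokes the SVM margin bound \cite[Theorem~4.5]{mohri-book2012} together with a stratification over the stage~$q$, and your explicit observation that $\|\phi_t(\vec x)\|_2\le 1$ is precisely the ingredient that makes the margin bound dimension-free. The only cosmetic difference is in the stratification weights: the paper assigns weight $1/2^q$ to stage~$q$ (which would literally give $\sqrt{(q+\log(1/\delta))/n}$), whereas your uniform split $\delta/q$ or your polynomial weights $6\delta/(\pi^2 q^2)$ more directly reproduce the $\sqrt{\log(q/\delta)/n}$ term in the statement.
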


This bound is a consequence of the SVM margin bound \citep[Theorem 4.5]{mohri-book2012}
and the stratification technique \citep{DBLP:journals/tit/Shawe-TaylorBWA98},
where the $q$-th stage receives weight $1/2^q$.

\paragraph{Adaptive splitting strategies.}
The above algorithm is data-independent in its selection of split points.
It is reasonable to suggest that a data-dependent
choice of split points can improve the performance of the learning algorithm.
Several greedy strategies suggest themselves, but after empirical trials
we suggest the following split heuristic: At every stage, a linear classifier of the 
embedding space is computed.
For each simplex, we identify the points in the simplex have been misclassified so far,
and choose a data point which is closest to the 
the barycentric center
{\em of the misclassified points}.
 As before, we limit the heuristic to a
constant number of stages, and it is also not necessary to subdivide an empty simplex,
or one that contains not many misclassified points.
(See Section \ref{sec:exp} for empirical results.)
The following bounds follow from Corollary \ref{cor:margin}:

\begin{theorem}\label{thm:adapt}
  If our adaptive classifier achieves sample error $\hat R$ with margin $\gamma$
  (i.e., $\hat R$ is the fraction of the points whose margin is less than $\gamma$)
on a sample of size $n$ after stopping at stage $q$ and retaining $k$ split points, its generalization error $R$
is bounded by
\begin{equation}
  \hat R+O(1/(\gamma\sqrt{n-k})+\sqrt{\log(q/\delta)/(n-k)})
\end{equation}  
with probability at least $1-\delta$.
\end{theorem}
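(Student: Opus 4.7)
The plan is to follow the hybrid sample compression plus margin scheme that yields Corollary~\ref{cor:margin}, treating the $k$ data-dependent split points as a compression set and the remaining $n-k$ points as the effective i.i.d.\ sample on which a margin argument in the spirit of Theorem~\ref{thm:bounds} operates. The reason this is the right framework is that the adaptive algorithm retains at most $k$ of its split points from the training sample itself (those closest to misclassified-point barycenters), while the bulk of its complexity — the SVM weights $\vec w$ in the embedded space — depends on the sample only through the linear classifier it produces once $T$ is fixed.

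First I would fix an arbitrary subset $T$ of size $k$ of the training sample and condition on the event that $T$ is precisely the set of split points selected by the adaptive algorithm. Given $T$, the nested embedding $\phi_T$ is fully determined, and the output classifier is the linear threshold $\sgn(\vec w\cdot\phi_T(\cdot))$ produced by soft SVM in the embedded space. Within the fixed hypothesis class $\mathcal{H}_T$ of all linear classifiers in $\phi_T$-space, the SVM margin generalization bound of \citet{mohri-book2012} applies uniformly, giving a conditional bound of the form $\hat R+O(1/(\gamma\sqrt{n-k}))$ on the generalization error when evaluated against i.i.d.\ samples independent of $T$.

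Next I would uncondition by union-bounding over the at most $\binom{n}{k}\le n^k$ possible choices of compression set $T$. The resulting $\sqrt{k\log n/(n-k)}$ penalty is absorbed into a $\sqrt{\log(1/\delta)/(n-k)}$ term after the usual inflation of $\delta$, reproducing precisely the hybrid compression-plus-margin bound of Corollary~\ref{cor:margin}. Finally I would stratify over the stopping stage $q$ by assigning weight $1/2^q$ to the $q$-th stage, exactly as in the proof of Theorem~\ref{thm:bounds} via the technique of \citet{DBLP:journals/tit/Shawe-TaylorBWA98}; this yields uniformity over all stopping times at the cost of the additive $\log q$ term inside the square root.

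The principal obstacle is justifying the compression interpretation cleanly, since the SVM weights $\vec w$ are optimized over all $n$ embedded points and are therefore not literally independent of the $n-k$ non-compression points. The standard resolution, which is what makes the hybrid bound valid, is that once $T$ is fixed the SVM output must lie in the fixed class $\mathcal{H}_T$, whose margin-based Rademacher complexity on an independent sample is controlled without reference to how $\vec w$ is chosen from $\mathcal{H}_T$; the margin bound over $\mathcal{H}_T$ then applies to every element of the class, in particular to the data-dependent SVM output. The only remaining subtlety is that the adaptive algorithm may retain fewer than $k$ distinct split points, which is handled by the standard monotonicity trick of union-bounding over compression sets of size at most $k$, costing only a constant factor.
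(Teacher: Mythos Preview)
Your approach is essentially the paper's own: it derives Theorem~\ref{thm:adapt} from Corollary~\ref{cor:margin}, which packages exactly the hybrid argument you describe (condition on the $k$ compression points, apply the fixed-class margin bound on the remaining $n-k$ i.i.d.\ points, union-bound over the $\binom{n}{k}$ choices), followed by stratification over the stopping stage $q$ as in Theorem~\ref{thm:bounds}. The only loose phrasing is your claim that the $\sqrt{k\log n/(n-k)}$ penalty gets ``absorbed'' into the $\log(1/\delta)$ term --- it does not vanish but survives as the $\log\binom{n}{k}$ inside Corollary~\ref{cor:margin}, and the paper itself is equally silent on how this collapses to the stated $\log(q/\delta)$ in Theorem~\ref{thm:adapt}.
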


\section{Experiments}\label{sec:exp}
Our embedding technique is {\em motivated} by provable bounds for convex polytopes,
but we find that it is sufficiently robust to yield impressive empirical results 
for non-convex polytopes or even general point sets.
All experiments utilized the python scikit-learn library \citep{scikit-learn} \footnote{code can be found at https://github.com/erankfmn/NBCS-embedding}.
The regularization parameter $C$ was 5-fold cross-validated 
over the set $\{2^{-5}, 2^{-3}, \ldots, 2^{15}\}$,
and for the RBF kernel, the $\gamma$
parameter was five-fold cross-validated over the set $\{2^{-15}, 2^{-3},\ldots, 2^{3}\}$. 
For our methods, the maximum iteration parameter $q$ was cross validated over the set $\{2 , \ldots ,5\}$.
Our algorithms usually converged even before reaching the maximum number of allowed iterations.

\begin{figure}
\centering
\subfloat[step 1]{\includegraphics[width=0.5\linewidth]{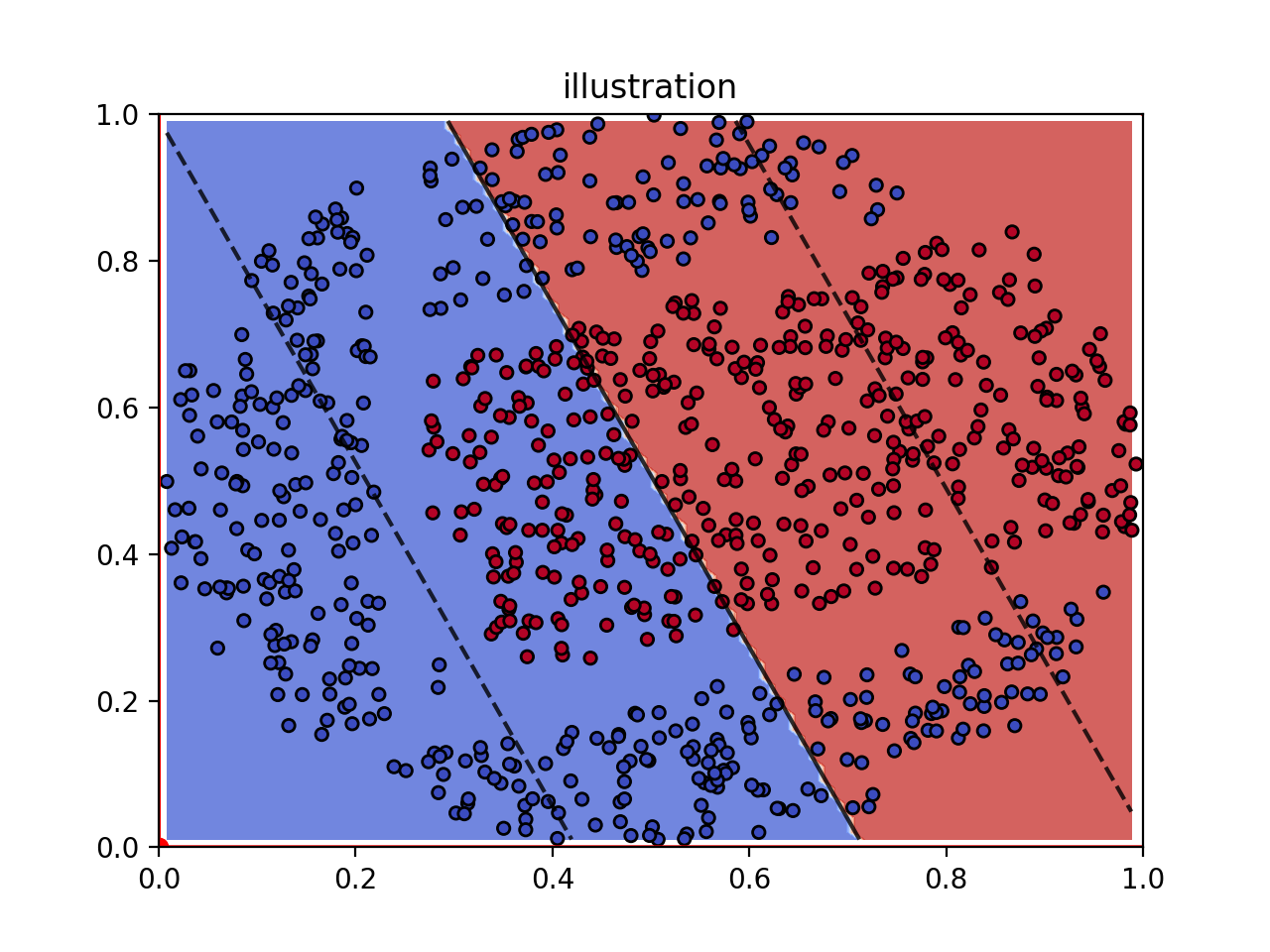}}
\subfloat[step 2]{\includegraphics[width=0.5\linewidth]{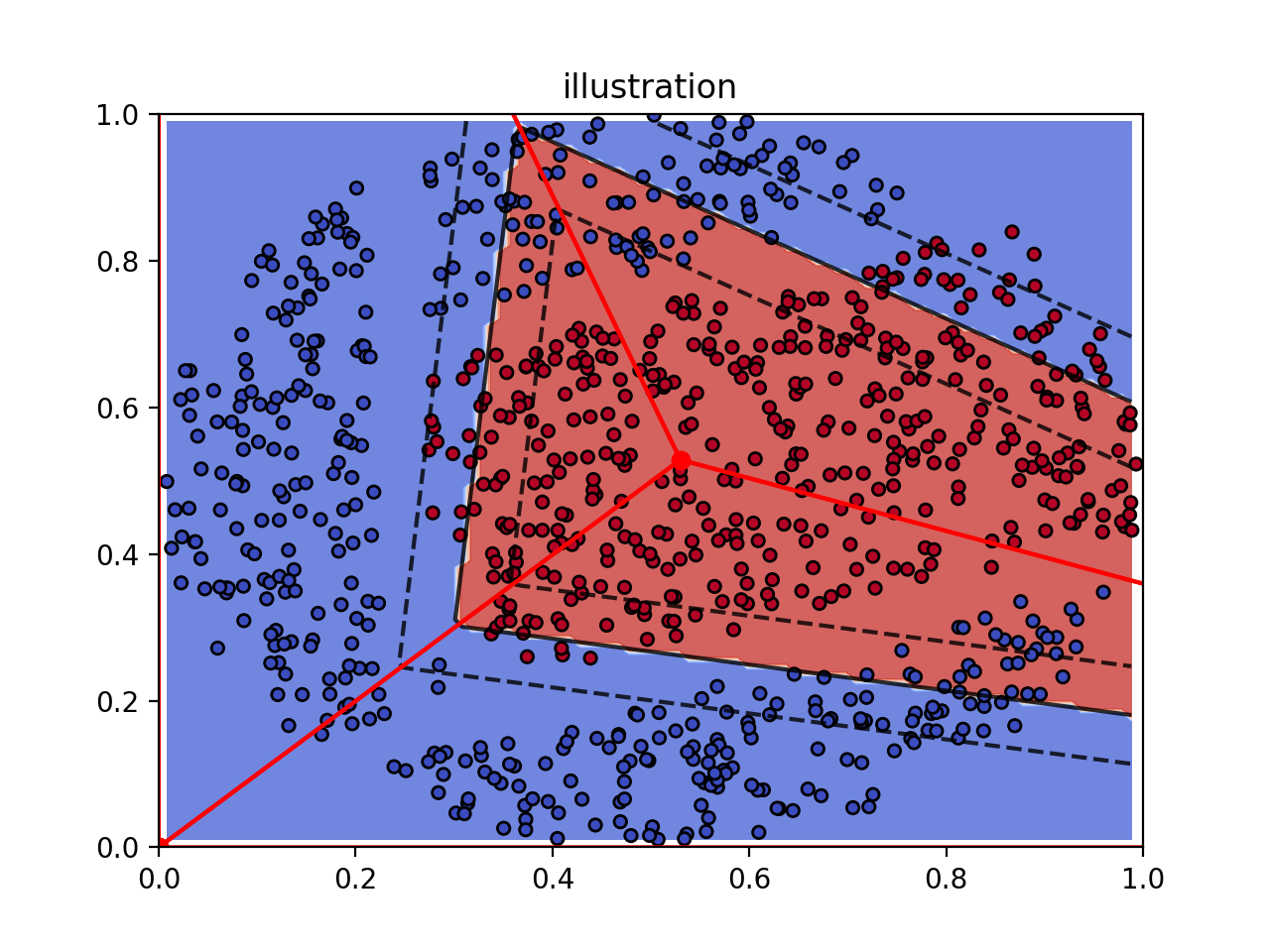}}\\
\subfloat[step 3]{\includegraphics[width=0.5\linewidth]{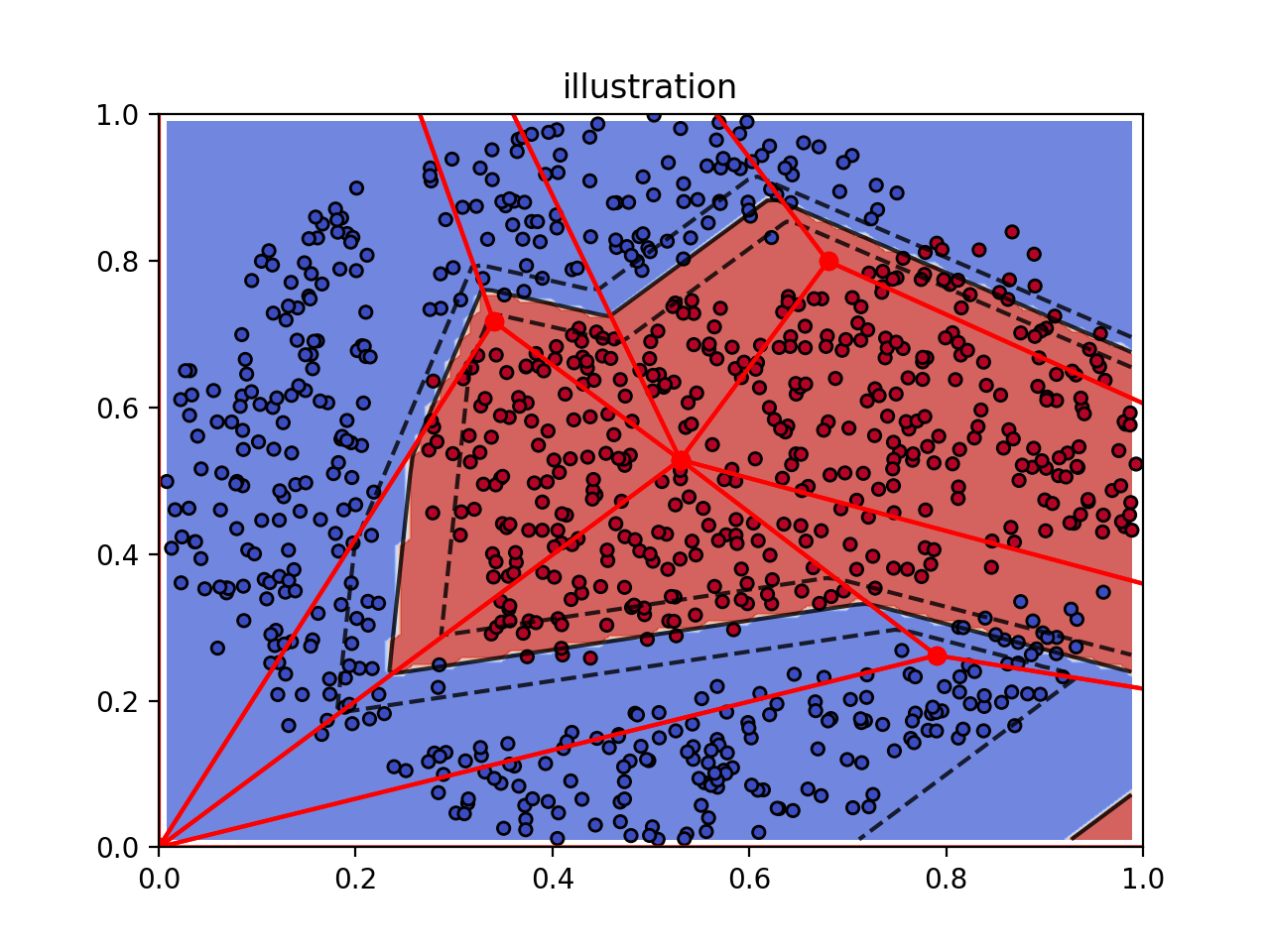}}
\subfloat[step 4]{\includegraphics[width=0.5\linewidth]{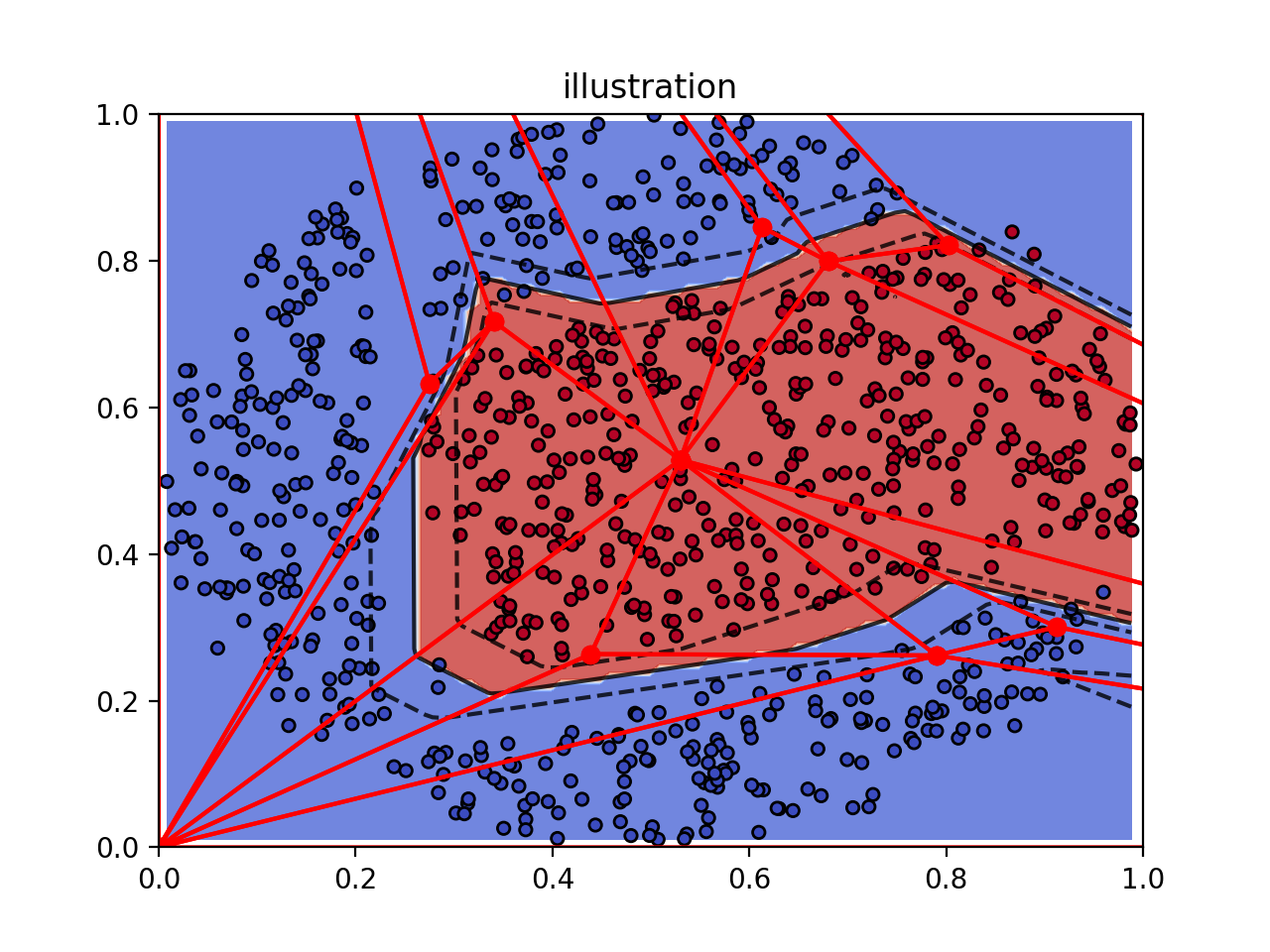}}
\caption{Learning a polytope separating the red and blue points.}
\label{fig:poly2}
\end{figure}

\paragraph{Non-convex polytope approximation.}
Before presenting the experiments, we give a simple example that illustrates the power of our approach
in approximating non-convex polytopes. We created a random data-set wherein all positive examples were taken 
from within a $5$-gon and the negative points from outside it.
This data was randomly generated within the unit circle: 
Each vector was sampled from the unit sphere and then normalized by $u^{1/d}$, 
where random variable $u \in [0, 1]$ is sampled independently at random for each vector.
We then sampled 5 halfspaces whose intersection formed the target polytope: 
For each halfspace, we sampled a random direction vector $w_j$ uniformly from the unit sphere,
and then sampled a random offset value $b_j \in [.05, .95]$ to produce the halfspace $(w_j,b_j)$. 
The intersection of these halfspaces is the target polytope. 
All data points inside the polytope with margin 0.05 were labeled as positive, all data points outside 
the polytope with margin 0.05 were labeled as negative, and the rest were discarded.

Figure \ref{fig:poly2} shows the iterative boundary formation,
where the bold black line is the decision boundary and the dotted lines are the 
margin ($\vec w \cdot \phi_t(x) = \pm 1$) .
For each iteration, the nested barycentric system is illustrated by the red lines.
A consistent approximation of the underlying polytope for multiple runs
was achieved after only 3 iterations.
Notice how the margins become smaller at each iteration until reaching their predetermined size.

\paragraph{Benchmarks.}
We first compared the runtime and accuracy of our methods in Section \ref{sec:learning}
-- uniform subdivision with NBCS (uni-NBCS) and adaptive splitting with NBCS (adapt-NBCS) --
to the 
2\textsuperscript{nd}
and 
3\textsuperscript{rd}
degree polynomial explicit feature maps, 
and to the RBF kernel SVM.
We used CoreSVM for the RBF kernel, as the LibSVM RBF failed to run on very large datasets.
We considered large datasets from LibSVM Machine Learning repository \citep{CC01a},
taking random $70-30\%$ splits averaged over $10$ random trials.
In the LibSVM implementation, 
the runtime of 2\textsuperscript{nd} degree SVM is $O(d^2 n)$
and 3\textsuperscript{rd} degree SVM is $O(d^3n)$. 
We implemented our algorithm to run in $O(d^2 n)$ time.
Table \ref{tab:res} shows a summary of our experimental results,
and demonstrates that our method compares favorably to the others
both in runtime and accuracy.
We believe that this is due to NBCS embedding the data into a small but
yet very expressive space.
We further compared our technique to other explicit feature map methods. 
Here we focused on accuracy as opposed to runtime, 
since all these methods have similar runtime complexity,
Figure \ref{fig:comp} demonstrates a comparison of the average accuracy between 
our embedding technique (adapt-NBCS), Kitchen Sink (KS) \citep{rahimi2007random}, 
Nystrom's approximation \citep{Williams00theeffect} 
and the adaptive $\chi^2$ \citep{vedaldi12efficient}, 
all of which have open source implementations,
over a large variety of medium sized datasets.
We also included the accuracy achieved by RBF CoreSVM.
Again, our algorithm's accuracy compared favorably with the others.

\section{Discussion and future work}
In this paper, we introduced the barycentric coordinate system embedding,
demonstrated its computational power, and suggested implementation techniques.
We derived a statistical foundation for this approach, and presented
experiments on LibSVM datasets which show promising empirical results.
This method is advantageous in the large data and small to medium feature size regime. 
Future work includes analytical and empirical investigations of other 
natural splitting strategies.

\begin{table*} 
\begin{center} 
\begin{tabular}{lccccccc}
\toprule
{\small Dataset} & {$n$} & {$d$} & {\small {2nd degree SVM}} & {\small {3rd degree SVM}} & {\small {CoreSVM-RBF}} & \textbf{\small {uni-NBCS}} & \textbf{\small {adapt-NBCS}}\\
\midrule
{\small letter} & 20,000 & 16 & {\small 84\%, 12.3 sec} & {\small 89.2\%, 81.7 sec} & {\small 81\%, 38 sec} & \textbf{\small 90.5\%, 14 sec} & \textbf{\small {91.5\%, 17 sec}}\\
{\small SkinNonSkin} & 245,057 & 4 & {\small 99.25\%, 13 sec} & {\small 99.4\%, 20 sec} & {\small 98.9\%, 570.4 sec} & \textbf{\small 97.6\%, 4 sec} & \textbf{\small 98.8\%, 4.2 sec}\\
{\small cod-rna} & 59,535 & 8 & {\small 94.9\%, 9 sec} & {\small 95.2\%, 18.6 sec} & {\small 94.3\%, 23 sec} & \textbf{\small 93.6\%, 8.7 sec} & \textbf{\small 94.5\%, 9 sec}\\
{\small shuttle} & 58,000 & 9 & {\small 96\%, 8 sec} & {\small 98\%, 25.8 sec} & {\small 93.2\%, 5.3 sec} & \textbf{\small 95.4\%,  8 sec} & \textbf{\small 97.8\%, 6.3 sec }\\
{\small covtype} & 581,012 & 54 & {\small 79\%, 1950 sec} & {\small 81.5\%, 8028 sec} & {\small 83.5\%, 10028 sec}  & \textbf{\small 82.3\%, 2040 sec} & \textbf{\small 82.5\%, 2140 sec}
\end{tabular} 
\end{center} 
\caption{UCI Datasets} \label{tab:res} 
\end{table*}

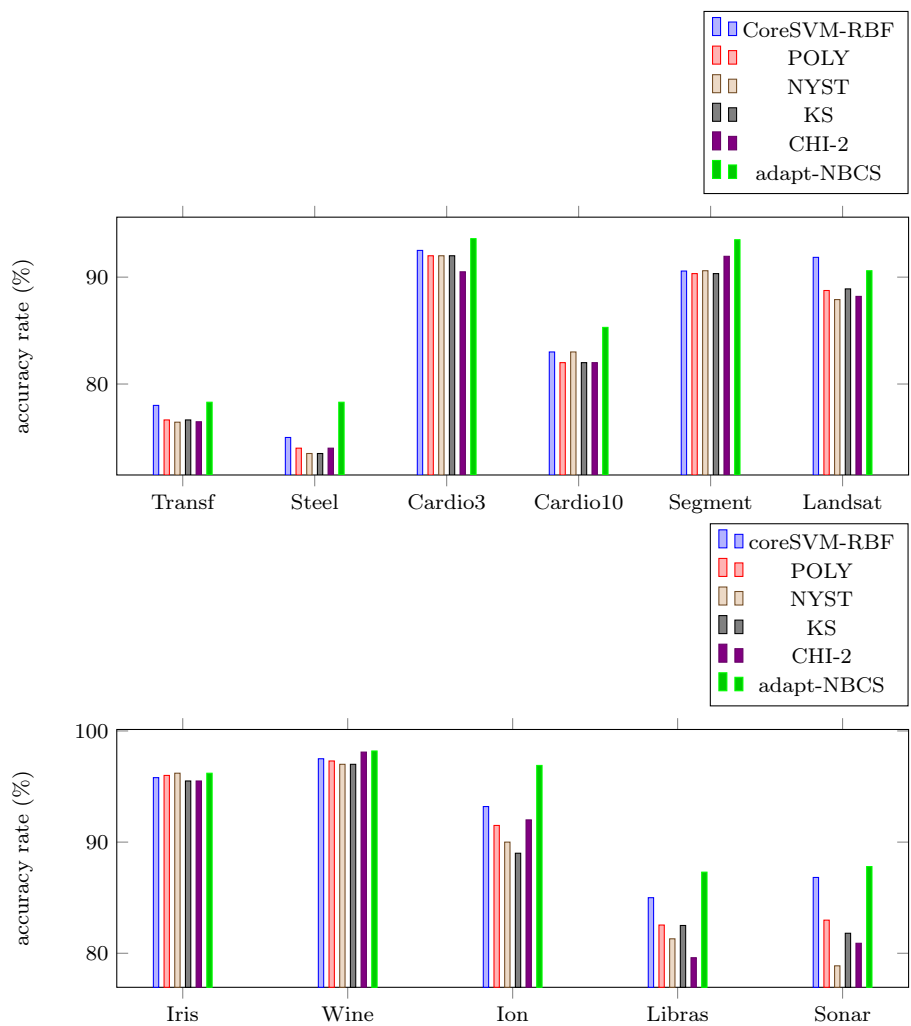
\begin{figure*}
\centering
\begin{tikzpicture}
    \begin{axis}[
        ybar,
        width  = 12cm,
        height = 5cm,
        bar width=2pt,
	legend style={at={(1,1.8)}},
        ylabel={accuracy rate (\%)},
        symbolic x coords={ Transf , Steel , Cardio3 , Cardio10 , Segment , Landsat ,},
        xtick = data,
    ]

\addplot coordinates { (Transf , 78) (Steel , 75) (Cardio3 , 92.5) (Cardio10 , 83)(Landsat , 91.85) (Segment , 90.57) }; 

\addplot coordinates { (Transf , 76.64) (Steel , 74) (Cardio3 , 92) (Cardio10 , 82) (Landsat , 88.75) (Segment , 90.33) }; 

\addplot coordinates { (Transf , 76.43) (Steel , 73.5)  (Cardio3 , 92) (Cardio10 , 83) (Landsat , 87.9) (Segment , 90.6) };

\addplot coordinates { (Transf , 76.64) (Steel , 73.5)  (Cardio3 , 92) (Cardio10 , 82) (Landsat , 88.9) (Segment , 90.33) };
\addplot coordinates { (Transf , 76.47) (Steel , 74)  (Cardio3 , 90.5) (Cardio10 , 82) (Landsat , 88.2) (Segment ,91.95) };

\addplot coordinates { (Transf , 78.3) (Steel , 78.3) (Cardio3 , 93.6) (Cardio10 , 85.3) (Landsat , 90.6) (Segment , 93.5)};

\legend{CoreSVM-RBF,POLY,NYST,KS,CHI-2 ,adapt-NBCS}
    \end{axis}
\end{tikzpicture}

\begin{tikzpicture}
    \begin{axis}[
        ybar,
        width  = 12cm,
        height = 5cm,
        bar width=2pt,
	legend style={at={(1,1.8)}},
        ylabel={accuracy rate (\%)},
        symbolic x coords={Iris, Wine , Ion , Libras , Sonar , },
        xtick = data,
    ]

\addplot coordinates { (Iris, 95.8) (Wine , 97.5) (Ion , 93.2) (Libras , 85) (Sonar , 86.82)    }; 

\addplot coordinates { (Iris, 96) (Wine , 97.3) (Ion , 91.5) (Libras , 82.54) (Sonar , 82.98)  }; 

\addplot coordinates { (Iris, 96.20) (Wine , 97) (Ion , 90) (Libras , 81.3) (Sonar , 78.87)  };

\addplot coordinates { (Iris, 95.5) (Wine , 97) (Ion , 89) (Libras , 82.5) (Sonar , 81.8) };
\addplot coordinates { (Iris, 95.5) (Wine , 98.1) (Ion , 92) (Libras , 79.6) (Sonar , 80.9)  };
\addplot coordinates {(Iris, 96.20) (Wine , 98.2) (Ion , 96.9) (Libras , 87.30) (Sonar , 87.8) };

\legend{coreSVM-RBF,POLY,NYST,KS,CHI-2, adapt-NBCS}
    \end{axis}
\end{tikzpicture}
\caption{classification results for different embedding techniques}
\label{fig:comp}
\end{figure*}

\newpage

\bibliographystyle{spbasic}      
\bibliography{paper}   

\appendix

\section{Hybrid PAC-compression bounds}

In this section, we present a hybrid compression bound used in the derivation
of Theorem \ref{thm:adapt}.

\paragraph{General theory.}

It will be convenient to present our results in generality and then specialize.
Our notation will be in line with \cite{DBLP:conf/alt/hk18}.
Let
$P$
be
a distribution on $\Z$.
We write $Z_{[n]}=(Z_1,\ldots,Z_n)\sim P^n$ and, for $f\in[0,1]^\Z$,
\beq
R(f,P)
:=\E_{Z\sim P}f(Z),
\qquad
\hat R(f,Z_{[n]})
:=\frac1n\sum_{i=1}^nf(Z_i).
\eeq
We write $\Delta_n(f)=
\Delta_n(f,P,Z_{[n]})
:=R(f,P)-\hat R(f,Z_{[n]})
$
and our main object of interest will be
\beqn
\label{eq:barD}
\bar\Delta_n(\F):=\sup_{f\in\F}\Delta_n(f,P,Z_{[n]}),
\eeqn
for $\F\subset[0,1]^Z$. The catch is that $\F$ may itself be random,
determined by the $Z_{[n]}$. We will distinguish
$\bar\Delta_n(\F)$ from the more familiar object
$\bar\Delta_n\fix(\F)$,
which is formally defined as in (\ref{eq:barD}),
but with the additional stipulation that $\F$ be a fixed function class,
independent of $Z_{[n]}$.

For a fixed $k\in\N$,
consider a fixed mapping $\rho:\Z^k\mapsto\F_k\subset[0,1]^Z$.
In words, $\rho$ maps $k$-tuples over $\Z$ into function classes over $\Z$.
This generalizes the notion of a {\em decoding} in a sample compression scheme,
where $\rho$ maps a $k$-tuple over $\Z$ into a {\em single} function $f\in[0,1]^\Z$.
Denote by $\F_\rho(Z_{[n]})$ the collection of all functions constructable
by $\rho$ on a given $Z_{[n]}$:
\beqn
\label{eq:Frho}
\F_\rho(Z_{[n]}) = \bigcup_{
  I\in{[n]\choose k}
}\rho(Z_I),
\eeqn
where
$[n]\choose k$ is the set of all $k$-subsets of $[n]$,
and $Z_I$ is
the restriction of
$Z_{[n]}$ to the index set $I$.
\footnote{
  We consider, for concreteness, permutation and repetition-invariant
  compression schemes; the extension to general ones is straightforward.
  The only requisite change consists of replacing
  $\cup_{I\in{[n]\choose k}}$ with
  $\cup_{I\in{[n]^k}}$ in
  (\ref{eq:Frho}).
}

A trivial application of the union bound yields
\beq
\P\paren{
  \bar\Delta_n(\F_\rho(Z_{[n]}))
  \ge\eps
  }
&\le& {n\choose k}\max_{ I\in{[n]\choose k}}
\P\paren{
\bar\Delta_n(\rho(Z_I))\ge\eps
}.
\eeq

The key observation is that, conditioned on $Z_I$,
the function class $\F_I:=\rho(Z_I)$ becomes deterministic
and independent of $Z_J$, where $J:=[n]\setminus I$.
Thus,
\beq
\P\paren{
\bar\Delta_n(\F_I)\ge\eps
}
&=&
\E_{Z_I}\sqprn{
\P\paren{
  \bar\Delta_n(\F_I)\ge\eps
  \gn Z_I
}
}
.
\eeq
Conditional on $Z_I$, we have, for a given $f\in\F_I$,
\beq
\Delta_n(f,P,Z_{[n]})
=
R(f,P)-\hat R(f,Z_{[n]})
&=&
\E_{Z\sim P}f(Z)
-
\frac1n\sum_{i=1}^nf(Z_i)
\\&=&
\E f(Z)
-
\frac1n\sum_{i\in J}f(Z_i)
-
\frac1n\sum_{i\in I}f(Z_i)
\\&\le&
\E f(Z)
-
\frac1n\sum_{i\in J}f(Z_i)
\\&\le&
\E f(Z)
-
\frac1{|J|}\sum_{i\in J}f(Z_i)
\\&=&
R(f,P)-\hat R(f,Z_{J})=
\Delta_{n-k}(f,P,Z_{J}),
\eeq
where $f(\cdot)\in[0,1]$ and $|J|=n-k$ were used.
It follows that
\beq
\bar\Delta_n(\F_I)\le
\bar\Delta_{n-k}\fix(\F_I).
\eeq


We now state the main result of this section:
\begin{theorem}
  \label{thm:hybrid}
\beqn
\label{eq:main-pac}
\P\paren{
  \bar\Delta_n(\F_\rho(Z_{[n]}))
  \ge\eps
  }
&\le&
{n\choose k}
\max_{ I\in{[n]\choose k}}        
\P\paren{
\bar\Delta_{n-k}\fix(\F_I)  
  \ge\eps
}.
\eeqn
\end{theorem}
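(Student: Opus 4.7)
The plan is to assemble the estimate by combining a union bound over $k$-subsets with a conditioning trick that turns the random function class into a fixed one, paying a loss in sample size. All the ingredients are already laid out in the discussion preceding the theorem; the task is to thread them together carefully with respect to the probabilistic quantifiers.

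First I would apply the union bound to the event $\{\bar\Delta_n(\F_\rho(Z_{[n]})) \ge \eps\}$ using the decomposition
\begin{equation*}
\F_\rho(Z_{[n]}) = \bigcup_{I\in\binom{[n]}{k}} \rho(Z_I),
\end{equation*}
which yields an extra factor $\binom{n}{k}$ and reduces the problem to bounding $\P(\bar\Delta_n(\F_I)\ge\eps)$ for a single (though still random) $\F_I = \rho(Z_I)$. Next I would condition on $Z_I$: by construction of $\rho$, once $Z_I$ is fixed, $\F_I$ becomes a deterministic function class, and moreover $\F_I$ is independent of $Z_J$ where $J=[n]\setminus I$, $|J|=n-k$.

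The third step is the pointwise comparison already displayed above the theorem: for any $f\in\F_I$, since $f$ takes values in $[0,1]$,
\begin{equation*}
\Delta_n(f,P,Z_{[n]}) = \E f(Z) - \tfrac{1}{n}\sum_{i\in J}f(Z_i) - \tfrac{1}{n}\sum_{i\in I}f(Z_i) \le \E f(Z) - \tfrac{1}{|J|}\sum_{i\in J}f(Z_i) = \Delta_{n-k}(f,P,Z_J),
\end{equation*}
where in the inequality I drop the nonnegative $I$-contribution and enlarge the coefficient on the $J$-sum from $1/n$ to $1/(n-k)$. Taking the supremum over $f\in\F_I$ and invoking that, conditionally on $Z_I$, the class $\F_I$ is fixed and $Z_J$ is an i.i.d.\ sample of size $n-k$ from $P$, gives
\begin{equation*}
\P\bigl(\bar\Delta_n(\F_I)\ge\eps \,\big|\, Z_I\bigr) \le \P\bigl(\bar\Delta_{n-k}\fix(\F_I)\ge\eps \,\big|\, Z_I\bigr).
\end{equation*}
Taking expectation over $Z_I$, then maximizing over $I$, and plugging into the union bound from the first step, yields the claimed inequality.

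The one subtle point, and the only place where there is any conceptual work to do, is step three: the pointwise inequality is where the price of ``compression'' is paid, namely the replacement of $n$ by $n-k$ in the effective sample size. Everything else is bookkeeping (union bound and tower property). So while there is no serious obstacle, care is needed to verify that $\bar\Delta_n(\F_I) \le \bar\Delta_{n-k}\fix(\F_I)$ holds \emph{uniformly} in $f\in\F_I$ before taking the sup, and that the conditional independence of $\F_I$ and $Z_J$ given $Z_I$ is what legitimizes treating $\F_I$ as ``fixed'' in the final display.
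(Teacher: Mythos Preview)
Your proposal reproduces the paper's argument step for step: the paper's proof of this theorem \emph{is} the discussion displayed immediately before its statement, consisting of the union bound over $I\in\binom{[n]}{k}$, the conditioning on $Z_I$ to render $\F_I$ deterministic and independent of $Z_J$, and the pointwise chain yielding $\bar\Delta_n(\F_I)\le\bar\Delta_{n-k}\fix(\F_I)$. As a reconstruction of the paper's reasoning, your write-up is faithful.

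One caveat, inherited verbatim from the paper rather than introduced by you: in the pointwise chain you ``drop the nonnegative $I$-contribution and enlarge the coefficient on the $J$-sum from $1/n$ to $1/(n-k)$.'' The first move increases the expression, but the second \emph{decreases} it, since you are enlarging the coefficient of a \emph{subtracted} nonnegative quantity; indeed $-\tfrac{1}{n}\sum_{i\in J}f(Z_i)\ge -\tfrac{1}{n-k}\sum_{i\in J}f(Z_i)$ whenever $f\ge 0$ and $k\ge 1$. Hence the displayed bound $\Delta_n(f,P,Z_{[n]})\le \Delta_{n-k}(f,P,Z_J)$ is not justified as written, and can fail (take $f$ vanishing on $Z_I$ but positive on $Z_J$). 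This is precisely the passage from the paper's third to fourth displayed line, so as a match to the paper your proposal is exact; but the step itself is a gap in both.
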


To apply (\ref{eq:main-pac}) to examples of interest,
let us compute the right-hand side of the bound for some function classes.

\paragraph{Example: VC classes.}

In our first example, suppose that $\rho$ maps $k$-tuples of $\Z$
to binary concept classes --- which might well be different for each $k$-tuple ---
of VC-dimension at most $d$.
More precisely, we take $\Z=\X\times\set{0,1}$,
where $\X$ is an instance space. Let $\H=\H_z\subseteq\set{0,1}^\X$ be a concept
class
defined by the $k$-tuple $z\in\Z^k$,
with VC-dimension $d$. Define $\F\subseteq\set{0,1}^\Z$ to
be its associated loss class:
\beq
\F=\set{ f_h : (x,y)\mapsto\pred{h(x)\neq y} ; h\in \H}.
\eeq
We call this setting a {\em hybrid} $(k,d)$ VC sample-compression scheme.
It is well-known
(see, e.g., \cite[Theorem 4.9]{MR1741038})
that
\beqn
\label{eq:E-vc}
\E[ \bar\Delta_{n}\fix(\F)]
\le c\sqrt{{d}/{n}},
\eeqn
where $c>0$ is a universal constant (for concreteness, we may take $c=144$)\footnote{
  \url{https://www.cs.bgu.ac.il/~asml162/wiki.files/dudley-pollard.pdf}
}.
Further, $\bar\Delta_{n-k}\fix(\F)$ is known to be concentrated about its mean
(see, e.g., \cite[Theorem 3.1]{mohri-book2012}):
\beqn
\label{eq:mcd}
\P\paren{
  \bar\Delta_{n}\fix(\F)
  \ge
  \E[ \bar\Delta_{n}\fix(\F)]
  +\eps
  }\le\exp(-2n\eps^2).
\eeqn

Combining (\ref{eq:main-pac}),
(\ref{eq:E-vc}), and
(\ref{eq:mcd}), we conclude:
\begin{corollary}
  \label{cor:hybrid-vc}
In a hybrid $(k,d)$
VC sample compression scheme,
on a sample of size $n$,
a learner's sample error $\herr(\hat h_n)$
and generalization error $\err(\hat h_n)$
satisfy
\beq
\err(\hat h_n)
\le
\herr(\hat h_n)
+
c\sqrt{\frac{d}{n-k}}
+
\sqrt{
  \frac{\log[\delta\inv{n\choose k}]}{2(n-k)}
}
\eeq
with probability at least $1-\delta$.
\end{corollary}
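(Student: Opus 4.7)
The plan is to combine the hybrid compression union bound of Theorem~\ref{thm:hybrid} with the standard expectation bound (\ref{eq:E-vc}) and the McDiarmid concentration (\ref{eq:mcd}) for each conditional fixed class $\F_I$. First I would observe that the learner's hypothesis $\hat h_n$, viewed as a loss function $f_{\hat h_n}$, lies in $\F_\rho(Z_{[n]})$ by construction (since the algorithm outputs a hypothesis from a class defined by some $k$-subsample of $Z_{[n]}$), so
\[
\err(\hat h_n)-\herr(\hat h_n)
=\Delta_n(f_{\hat h_n},P,Z_{[n]})
\le \bar\Delta_n(\F_\rho(Z_{[n]})).
\]
Thus it suffices to show a high-probability upper bound on $\bar\Delta_n(\F_\rho(Z_{[n]}))$.

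Next I would apply (\ref{eq:main-pac}) with the prescribed threshold. Fix an arbitrary $I\in\binom{[n]}{k}$. Conditioned on $Z_I$, the class $\F_I$ is a deterministic loss class derived from a $\set{0,1}$-valued concept class of VC-dimension at most $d$, so (\ref{eq:E-vc}) yields
\[
\E\bigl[\bar\Delta_{n-k}\fix(\F_I)\bigr]\le c\sqrt{\frac{d}{n-k}}.
\]
Applying the bounded-differences inequality (\ref{eq:mcd}) to the $n-k$ i.i.d.\ samples $Z_J$ then gives, for every $\eta>0$,
\[
\P\paren{\bar\Delta_{n-k}\fix(\F_I)\ge c\sqrt{\tfrac{d}{n-k}}+\eta\gn Z_I}\le \exp(-2(n-k)\eta^2),
\]
and taking expectations over $Z_I$ yields the same unconditional bound.

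Plugging this into Theorem~\ref{thm:hybrid} with $\eps=c\sqrt{d/(n-k)}+\eta$ gives
\[
\P\paren{\bar\Delta_n(\F_\rho(Z_{[n]}))\ge c\sqrt{\tfrac{d}{n-k}}+\eta}\le \binom{n}{k}\exp(-2(n-k)\eta^2).
\]
Setting the right-hand side equal to $\delta$ and solving for $\eta$ yields $\eta=\sqrt{\log[\delta^{-1}\binom{n}{k}]/(2(n-k))}$, which combined with the opening inequality delivers the claimed bound.

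The argument is essentially a routine chaining of the three ingredients already supplied in the appendix; no real obstacle remains beyond bookkeeping. The only subtle point worth flagging is the legitimacy of passing from the conditional statement to the unconditional one in the union bound step: this relies on $\F_I$ becoming independent of $Z_J=Z_{[n]\setminus I}$ once we condition on $Z_I$, which is precisely the observation justifying the reduction $\bar\Delta_n(\F_I)\le\bar\Delta_{n-k}\fix(\F_I)$ already established in the derivation of Theorem~\ref{thm:hybrid}.
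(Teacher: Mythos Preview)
Your proposal is correct and follows exactly the approach the paper indicates: combining (\ref{eq:main-pac}), (\ref{eq:E-vc}), and (\ref{eq:mcd}), then solving the resulting tail bound for the threshold. You have simply made explicit the bookkeeping the paper leaves to the reader.
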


\paragraph{Example: Margin classes.}
Here, we take $\X$ to be an abstract set,
$\Y=\set{-1,1}$, $\Z=\X\times\Y$,
and define
\beq
\tilde\H=\set{h_w:\X\ni x\mapsto w\cdot\Psi(x) ; \nrm{w}\le1},
\eeq
where $\Psi(x)=\Psi_z(x)$ is a map from $\X$ to $\R^N$
determined by some $k$-tuple $z\in\Z$, with $\nrm{\Psi_z(\cdot)}\le1$.
Associate to $\tilde\H$ the $\gamma$-margin loss class
\beq
\F_\gamma=\set{f_h:\X\times\set{-1,1}\ni(x,y)\mapsto\Phi_\gamma(yh(x)); h\in\tilde\H},
\eeq
where $\Phi_\gamma(t)=\min(0,\max(1,1-t/\gamma))$.
We refer to this setting as a {\em hybrid} $(k,\gamma)$ margin sample compression scheme.
It is a standard fact (see, e.g., \cite[Theorem 4.4]{mohri-book2012}) that
\beqn
\label{eq:marg-bd}
\P\paren{
  \bar\Delta_{n}\fix(\F_\gamma)
  \ge
  \frac{2}{\gamma\sqrt n}
  +\eps
  }\le\exp(-2n\eps^2).
\eeqn

Combining (\ref{eq:main-pac}),
(\ref{eq:marg-bd}),
and a standard stratification argument (see 
\cite[Theorem 4.5]{mohri-book2012}), we obtain the following result.
Fix a map $\rho:\Z^k\to\Psi(\cdot)$.
Given a sample $Z_{[n]}=(X_i,Y_i)_{i\in[n]}$ drawn iid,
the learner chooses some $k$ examples to define the random mapping $\Psi_z:\X\to\R^N$.
Having mapped the sample to $R^N$, he runs SVM and obtains a hyperplane $w$.
\begin{corollary}\label{cor:margin}
With probability at least $1-\delta$, we have
\beq
\E_{(X,Y)}[\sgn(Yw\cdot\Psi(X))\le0\gn Z_{[n]}]
&\le&
\frac1n\sum_{i=1}^n\max(0,1-Y_iw\cdot\Psi(X_i))
+
\frac{4}{\nrm{w}\sqrt{n-k}}
\\&+&
\sqrt{\frac{\log\log_2\frac{2}{\nrm{w}}}{n-k}}
+
\sqrt{\frac{\log(2{n\choose k}/\delta)}{2(n-k)}}.
\eeq
\end{corollary}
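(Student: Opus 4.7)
The plan is to combine the hybrid compression bound of Theorem \ref{thm:hybrid} with the fixed-margin concentration bound (\ref{eq:marg-bd}), and then stratify over the data-dependent norm $\nrm{w}$ returned by SVM. The compression step contributes the ${n\choose k}$ factor that accounts for the choice of feature map from the $k$ selected sample points, and the stratification step contributes the $\sqrt{\log\log_2(2/\nrm{w})/(n-k)}$ term that handles the a-priori unknown scale of $w$.

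First I would fix $\gamma>0$ and apply Theorem \ref{thm:hybrid} to the $\gamma$-margin loss class $\F_\gamma$ induced by the random map $\Psi_{Z_I}$, which is deterministic conditional on $Z_I$. Substituting (\ref{eq:marg-bd}) with $\eps=\sqrt{\log[{n\choose k}/\delta']/(2(n-k))}$ and inverting yields, with probability at least $1-\delta'$ over $Z_{[n]}$,
\beq
\E\Phi_\gamma(Yw\cdot\Psi(X))
&\le&\frac1n\sum_{i=1}^n\Phi_\gamma(Y_iw\cdot\Psi(X_i))+\frac{2}{\gamma\sqrt{n-k}}+\sqrt{\frac{\log[{n\choose k}/\delta']}{2(n-k)}}
\eeq
simultaneously for every $w$ with $\nrm{w}\le 1$. (Theorem \ref{thm:hybrid} averages the empirical loss over the unseen $n-k$ examples, but this extends to the full-sample average since $\Phi_\gamma\in[0,1]$.)

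Next I would stratify over dyadic scales of $1/\nrm{w}$, following \cite[Theorem 4.5]{mohri-book2012}. Setting $\gamma_j=2^{-j}$ and allocating confidence $\delta_j=\delta/(2j(j+1))$ at level $j\in\N$, a union bound yields the fixed-$\gamma_j$ inequality simultaneously for all $j$. Given the $w$ returned by SVM, pick the unique $j$ with $\gamma_j\le 1/\nrm{w}<2\gamma_j$; then $2/(\gamma_j\sqrt{n-k})\le 4/(\nrm{w}\sqrt{n-k})$ and $\log(1/\delta_j)\le 2\log j+\log(2/\delta)+O(1)$ with $j\le\log_2(2/\nrm{w})$. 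After combining with the compression budget, this produces precisely the $4/(\nrm{w}\sqrt{n-k})$, $\sqrt{\log\log_2(2/\nrm{w})/(n-k)}$, and $\sqrt{\log(2{n\choose k}/\delta)/(2(n-k))}$ terms of the corollary.

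Finally, the standard sandwich $\pred{t\le 0}\le\Phi_\gamma(t)\le\max(0,1-t/\gamma)$ converts the left-hand side into the conditional 0-1 generalization error $\E_{(X,Y)}[\sgn(Yw\cdot\Psi(X))\le 0\gn Z_{[n]}]$ and bounds the empirical $\Phi_\gamma$-average by the empirical hinge loss $\frac1n\sum_i\max(0,1-Y_iw\cdot\Psi(X_i))$ appearing in the claim. The delicate step will be coordinating the two independent sources of data-dependence — the feature map $\Psi$, absorbed by the compression union bound, and the SVM weight $w$, absorbed by margin stratification — so that their confidence budgets compose cleanly without double-counting the $\log j$ overhead or the $\log{n\choose k}$ overhead.
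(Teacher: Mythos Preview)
Your proposal is correct and follows essentially the same route the paper sketches: the paper itself only says ``Combining (\ref{eq:main-pac}), (\ref{eq:marg-bd}), and a standard stratification argument (see \cite[Theorem 4.5]{mohri-book2012})'', and you have simply unpacked those three ingredients in the expected way---the hybrid compression union bound over $I\in{[n]\choose k}$, the fixed-$\gamma$ margin concentration on the remaining $n-k$ points, and dyadic stratification over $\gamma$ to accommodate the data-dependent $\nrm{w}$, followed by the ramp/hinge sandwich. There is no alternative argument here; your write-up is just a more explicit version of the paper's one-line proof.
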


\end{document}